\definecolor{figuregray}{RGB}{245,245,245} 
\newtheorem{theorem}{Theorem}
\newtheorem{example}[theorem]{Example}
\newtheorem{lemma}[theorem]{Lemma}
\newtheorem{prop}[theorem]{Proposition}
\newtheorem{corollary}[theorem]{Corollary}
\newtheorem{quest}[theorem]{Question}
\theoremstyle{definition}
\def\namedlabel#1#2{\begingroup
   \def\@currentlabel{#2}%
   \label{#1}\endgroup
}
\theoremstyle{definition}
\newtheorem{definition}[theorem]{Definition}
\theoremstyle{remark}
\newtheorem{remark}[theorem]{Remark}
\theoremstyle{remark}
\DeclareMathOperator{\Sp}{Span}
\DeclareMathOperator{\Aff}{Aff}
\DeclareMathOperator{\Hom}{Hom}
\DeclareMathOperator{\ReLU}{ReLU}
\DeclareMathOperator{\cM}{\mathcal{M}}
\DeclareMathOperator{\cR}{\mathcal{R}}
\DeclareMathOperator{\cN}{\mathcal{N}}
\DeclareMathOperator{\cI}{\mathcal{I}}
\DeclareMathOperator{\cC}{\mathcal{C}}
\DeclareMathOperator{\cU}{\mathcal{U}_\sigma}
\DeclareMathOperator{\R}{\mathbb{R}}
\DeclareMathOperator{\Z}{\mathbb{Z}}
\DeclareMathOperator{\N}{\mathbb{N}}
\DeclareMathOperator{\1}{\mathbbm{1}}
\DeclareMathOperator{\smcirc}{\circ}
\title{On Universality Classes of Equivariant Networks}
\author{%
  Marco Pacini\thanks{University of Trento; Fondazione Bruno Kessler. \texttt{mpacini@fbk.eu}} \\
  \And Gabriele Santin\thanks{Ca’ Foscari University of Venice. \texttt{gabriele.santin@unive.it}}\\
  \And Bruno Lepri\thanks{Fondazione Bruno Kessler, \texttt{lepri@fbk.eu}}\\
  \And Shubhendu Trivedi\thanks{\texttt{shubhendu@csail.mit.edu}}
}
\begin{document}

\maketitle

\begin{abstract}
Equivariant neural networks provide a principled framework for incorporating symmetry into learning architectures and have been extensively analyzed through the lens of their \emph{separation power}, that is, the ability to distinguish inputs modulo symmetry. This notion plays a central role in settings such as graph learning, where it is often formalized via the Weisfeiler--Leman hierarchy. In contrast, the \emph{universality} of equivariant models---their capacity to approximate target functions---remains comparatively underexplored. 
In this work, we investigate the approximation power of equivariant neural networks beyond separation constraints.
We show that separation power does not fully capture expressivity: models with identical separation power may differ in their approximation ability.
To demonstrate this, we characterize the universality classes of shallow invariant networks, providing a general framework for understanding which functions these architectures can approximate. Since equivariant models reduce to invariant ones under projection, this analysis yields sufficient conditions under which shallow equivariant networks fail to be universal.
Conversely, we identify settings where shallow models do achieve separation-constrained universality.
These positive results, however, depend critically on structural properties of the symmetry group, such as the existence of adequate normal subgroups, which may not hold in important cases like permutation symmetry. 
\end{abstract}

\section{Introduction}

Equivariant neural networks offer a principled framework to incorporate symmetry into learning architectures, attracting sustained attention for both their empirical successes and theoretical richness~\citep{cohen_group_2016, kondor_generalization_2018, bronstein_geometric_2021, maron_invariant_2018, ravanbakhsh_universal_2020}. While their separation power---the capacity to distinguish inputs up to symmetry---has been extensively studied, comparatively less is understood about their approximation capabilities.

In classical approximation theory, expressivity is often characterized via universality---the capacity of a model class to approximate any target function within a given function space to arbitrary precision~\citep{cybenko_approximation_1989, hornik_approximation_1991}. In the equivariant setting, however, this notion must be refined. This is because such models treat symmetric inputs as indistinguishable; they can only approximate functions compatible with the underlying symmetry, subject additionally to spurious constraints arising from the imperfect interactions between equivariance and linear inductive biases on neural network layers.
In this context, universality becomes inherently relative—defined with respect to a particular separation relation that circumscribes the model's ability to distinguish inputs.

Graph learning has served as a primary testbed for studying invariant and equivariant architectures, where models are typically required to respect node permutation symmetries~\citep{maron_invariant_2018, puny_equivariant_2023,bevilacqua_equivariant_2022}.  
Within this setting, separation power is most commonly assessed using the Weisfeiler-Leman (WL) test~\citep{weisfeiler_reduction_1968} or homomorphism counting techniques~\citep{lovasz_large_2012}.
A whole range of architectures---including Graph Neural Networks (GNNs)~\citep{scarselli_graph_2009, gori_new_2005, kipf_semi-supervised_2017}, Invariant Graph Networks (IGNs)~\citep{maron_invariant_2018, maron_learning_2020}, and subgraph-based models~\citep{alsentzer_subgraph_2020, bevilacqua_equivariant_2022}—have been analyzed through this lens.
More recently, investigations analyzing separation power have been extended beyond graph-structured data to broader classes of equivariant models~\citep{joshi_expressive_2023, pacini_separation_2025}.
While much of the literature in geometric deep learning has centered on separation as the primary metric of expressivity, recent work~\citep{morris_future_2024, davidson_holder_2025} has called for a more comprehensive view that includes approximation capabilities more generally.
The role of equivariant layers in determining approximation power remains underexplored.  
Typically, these are composed to increase the model’s separation capacity, while a universal component---such as a multilayer perceptron with adjustable width---is appended to approximate functions within the separation-constrained class. As a result, universality is achieved only relative to the distinctions introduced by the equivariant backbone. However, there is no general theory describing how equivariant layers themselves contribute to approximation.

To address this gap, we examine the approximation capabilities of equivariant neural networks beyond what is captured by separation constraints alone.  
For this purpose, it suffices to focus on invariant architectures, as the core phenomena extend to the equivariant setting. 
Indeed, projecting the output of an equivariant model onto the trivial representation yields an invariant network. 
Accordingly, our analysis of invariant networks provides insight into the approximation limits of a broad class of equivariant architectures. We begin by showing that invariant neural networks can be expressed as function that vanishes on certain differential operators (Section~\ref{section:charact}).
This formulation allows us to derive sufficient conditions under which a shallow invariant network fails to be universal within the class of separation-constrained continuous functions (Section~\ref{section:failure}).

Our theory and analysis leads to three key insights.
First, remarkably, we identify network families that possess identical separation power yet differ in their approximation capabilities---demonstrating that separation alone does not fully characterize expressivity. 
In particular, we show that shallow networks composed of commonly used equivariant layers---such as PointNets and CNNs with filter width $1$---fail to be universal, despite matching the separation power of permutation-invariant continuous functions (Section~\ref{section:examples_of_failure}).
Second, this implies that the only two architectural choices that impact approximation power are depth and the type of hidden representations, the latter being strongly influenced by the structure of the symmetry group.
Third, we show that a generalization of the results by \cite{ravanbakhsh_universal_2020} produces a broad family of shallow models that are universal within the separation-constrained function class (Section~\ref{section:examples_of_universality}).  
However, these constructions fundamentally rely on the structure of the symmetry group. 
In particular, on the existence of normal subgroups of suitable size, a condition that is not always met, as is the case for key symmetry groups such as the permutation group.

We summarize the main contributions of this work as follows:
\begin{itemize}
    \item We characterize the universality classes of shallow invariant networks (Theorem~\ref{th:charact_diff}).
    \item We establish general \emph{sufficient conditions} under which universality fails, even within function classes exhibiting maximal separation (Theorem~\ref{th:first_suff_diff} and Theorem~\ref{th:suff_cond}).
    \item Leveraging these results, we construct explicit examples of invariant models that attain maximal separation yet fail to be universal, demonstrating that separation is not sufficient to guarantee universality (Proposition~\ref{prop:second_inclusion}).
    \item We generalize the results by \citet{ravanbakhsh_universal_2020} to a broader family of models (Theorem~\ref{th:rav_gen}).
\end{itemize}

\section{Related Work}

Classical approximation theory for neural networks has established foundational results for shallow architectures with sigmoidal activations~\citep{cybenko_approximation_1989, hornik_approximation_1991, barron_universal_1993}. Necessary and sufficient conditions on activation functions were later given by~\citet{leshno_multilayer_1993}, and further refinements appear in~\citet{pinkus_approximation_1999}. For general treatments of approximation theory in modern neural networks, we refer to~\citep{devore_neural_2021,petersen_mathematical_2025}.

Moving beyond shallow networks, \citet{yarotsky_error_2017, yarotsky_optimal_2018} proved fundamental results on the approximation rates of deep neural networks, while \citet{siegel_optimal_2024} derived sharp bounds for deep $\ReLU$ networks.
These results establish that deep networks are not only universal under mild assumptions but also more parameter-efficient than their shallow counterparts, approximating complex functions with significantly fewer parameters.

Equivariant neural networks offer a principled way to encode symmetry into learning architectures~\citep{cohen_group_2016, kondor_generalization_2018, bronstein_geometric_2021}, with early applications across physics~\citep{thomas_tensor_2018}, chemistry~\citep{schutt_schnet_2018}, biology~\citep{joshi_grnade_2024}, and computer vision~\citep{weiler_3d_2018}.
Beyond the foundational work of~\citet{yarotsky_universal_2018}, universality in equivariant and invariant settings has been studied from multiple perspectives. 
A number of works~\citep{ravanbakhsh_universal_2020, petersen_equivalence_2020, maron_universality_2019, sonoda_universality_2022} establish universality for certain shallow equivariant networks using unconstrained hidden representations.
\citet{keriven_universal_2019} extended this analysis to equivariant graph neural networks. 
These proofs rely on two main techniques. 
The first is the application of the Stone–Weierstrass theorem, or one of its variants, for instance via invariant polynomials~\citep{blum-smith_machine_2023, puny_equivariant_2023}, to establish density results in spaces of continuous functions.
The second is the use of a symmetrization operator, which enforces equivariance but causes the dimension of intermediate representations to grow exponentially. 
However, these approaches are often impractical: the Stone–Weierstrass theorem cannot be applied directly, since the network families of interest do not form function algebras, while symmetrization leads to prohibitive computational costs.
Although canonicalization methods can improve the efficiency of models derived through symmetrization \citep{puny_frame_2021, kaba_equivariance_2023}, in many cases such techniques cannot be applied \citep{dym_equivariant_2024} or remain computationally inefficient.

A complementary line of work examines permutation-equivariant networks over multisets. \citet{zaheer_deep_2017, qi_pointnet_2017, segol_universal_2020} prove universality for such models under constrained hidden representations, but their results are restricted to architectures of depth three. As a result, the universality of truly shallow networks—those with depth two or less—remained unresolved.

In this work, we address this gap and show that certain shallow equivariant networks are \emph{not} universal in the space of equivariant functions. This stands in contrast to the fully connected case, where universality holds generically, and highlights that depth can play a qualitatively different role in equivariant architectures, extending beyond parameter efficiency to approximation capacity itself.

To capture practical models within a theoretical framework, recent work has shifted toward studying universality \emph{up to separation}. In permutation-equivariant networks, expressivity has been analyzed through the Weisfeiler–Leman (WL) hierarchy~\citep{xu_how_2019, morris_weisfeiler_2019, maron_provably_2019, chen_equivalence_2019, azizian_expressive_2021}, with refinements based on homomorphism counts and subgraph-aware techniques~\citep{zhang_beyond_2024, frasca_understanding_2022}. \citet{joshi_expressive_2023} extended this approach to geometric domains, deriving depth-sensitive universality results under representation and orbit separation constraints. More generally,~\citet{pacini_separation_2025} recently characterized the separation power of neural networks for arbitrary finite groups and permutation representations.
While these works elucidate distinguishability, they do not fully account for approximation behavior.

The role of equivariant layers in approximation, \emph{beyond} their contribution to separation, remains only partially understood. 
In practice, such layers are often composed to enhance separation power, followed by a universal component—typically an MLP—to approximate functions within the induced separation class. In the invariant case, this composition can yield universality. 
In the equivariant case, however, universality is not guaranteed and is only known to hold in specific instances~\citep{segol_universal_2020}. 
Thus, the expressive power of the overall model remains fundamentally limited by the separation achieved by the equivariant stack. Yet equivariant layers may also contribute directly to approximation, and in some cases are known to suffice for universality within a fixed separation class~\citep{ravanbakhsh_universal_2020}.

Our work provides a detailed analysis of the universality classes of shallow equivariant networks. We show that equivariant layers are \emph{not always sufficient} to guarantee universality up to separation, and that separation alone is \emph{not a complete proxy} for approximation. 
We show explicit examples of models with identical separation power but differing approximation capacity.
More broadly, we introduce general techniques for comparing the approximation power of equivariant models beyond separation, offering a more refined and complete understanding of expressivity in symmetry-constrained architectures.

\section{Preliminaries}
\label{sec:pre}

\subsection{Groups and Equivariance}
\label{sec:groups_and_equivariance}

We are interested in functions that exhibit symmetry under specified transformations. Mathematically, such symmetries are described by groups: sets of transformations closed under composition, equipped with inverses and an identity element. While group theory offers a rigorous algebraic framework for analyzing symmetry, applying these ideas within neural networks requires their reformulation in linear-algebraic terms. This translation is achieved via representation theory, which associates abstract group elements with matrix actions on vector spaces. For a brief overview, see Appendix~\ref{section:preliminaries_appendix}; for a more detailed treatment, see~\cite{fulton_representation_2004}.

Our focus will be on permutation representations, which naturally arise when a group $G$ acts on a finite set $X$. 
Let $\R^X$ denote the space of real-valued functions on $X$. 
For each $x \in X$, define $e_x \in \R^X$ as the function taking value $1$ at $x$ and $0$ elsewhere.
The set $\{ e_x \}_{x \in X}$ forms a canonical basis for $\R^X$. 
A permutation representation of $G$ on $V = \R^X$ is a linear action satisfying $g(e_x) = e_{gx}$ for all $g \in G$ and $x \in X$. 
If $V$ and $W$ are permutation representations of $G$, a map $\phi: V \to W$ is $G$-equivariant if $\phi(gv) = g\phi(v)$ for all $g \in G$ and $v \in V$.
We denote by $\Hom(V, W)$ the space of linear maps from $V$ to $W$, and by $\Hom_G(V, W)$ the subspace of $G$-equivariant linear maps. 
Similarly, let $\Aff(V, W)$ denote the space of affine maps from $V$ to $W$, and $\Aff_G(V, W)$ the subspace of $G$-equivariant affine maps. 
The spaces $\Hom(V, W)$, $\Aff(V, W)$, and their equivariant counterparts are real vector spaces under pointwise addition and scalar multiplication. A result from~\citet{pacini_characterization_2024} shows that any map $f \in \Aff(V, W)$ admits a unique decomposition of the form $f = \tau_v \smcirc \phi$ for some $v \in W$ and $\phi \in \Hom(V, W)$, where $\tau_v(w) = w + v$. 
Such a map is $G$-equivariant iff $\phi$ is $G$-equivariant and $v \in W^G = \{ v \in W \mid gv = v ; \forall g \in G \}$, the fixed-point subspace of $W$. In particular, there is a linear morphism $\lambda: \Aff_G(V, W) \to \Hom_G(V, W)$ that projects an affine map to its linear part.

\subsection{Equivariant Neural Networks}
\label{sec:equi_nn}

With all necessary definitions in place, we now introduce the notion of an equivariant neural network. Throughout this work, we consider networks that are equivariant under the action of a finite group, using arbitrary point-wise continuous activation functions, and with layers that transform according to permutation representations. We adopt the notation introduced by~\citet{pacini_separation_2025}.

\begin{definition}[Point-wise Activation]
\label{def:point-wise-activation}
Let $\sigma: \R \to \R$ be a nonlinear activation function, and let $\R^X$ denote a permutation representation of a group $G$.
We define the corresponding \emph{point-wise activation} $\tilde \sigma: \R^X \to \R^X$ by setting $\tilde \sigma \left( \sum_{x \in X} \alpha_x e_x \right) = \sum_{x \in X} \sigma(\alpha_x) e_x$.
When no confusion arises, we will denote both $\sigma$ and $\tilde \sigma$ by the same symbol.
\end{definition}

\begin{definition}[Neural Networks and Neural Spaces]
\label{def:nn_space}
Let $G$ be a group, and let $V_0, \dots, V_d$ be permutation representations of $G$. 
For each $i = 1, \dots, d$, let $M^i \subseteq \Aff_G(V_{i-1}, V_i)$ be a set of $G$-equivariant affine maps.
For $d \geq 2$, the \emph{neural space} associated with the layers $M^1, \dots, M^d$ and a point-wise activation function $\sigma$ is defined recursively by
\[
\cN_\sigma(M^1, \dots, M^d) = \left\{ \phi^d \circ \tilde \sigma \circ \eta^{d-1} \,\middle|\, \phi^d \in M_d,\ \eta^{d-1} \in \cN_\sigma(M^1, \dots, M^{d-1}) \right\},
\]
with the base case $\cN_\sigma(M^1) = M^1$.
An element $\eta^d \in \cN_\sigma(M^1, \dots, M^d)$ is called a \emph{neural network} with layers in $M^1, \dots, M^d$ and activation $\sigma$.
When each $M^i$ is taken to be the full space $\Aff_G(V_{i-1}, V_i)$, we write $\cN_\sigma(V_0, \dots, V_d)$ as shorthand for $\cN_\sigma(M^1, \dots, M^d)$.
\end{definition}

To capture architectures commonly used in practice, we adopt a more structured form for the layer spaces $M \subseteq \Aff_G(V, \R^X)$, as proposed in Section 4.2 of \citet{pacini_separation_2025}. Specifically, we assume that $M$ takes the form
\begin{equation}
    \label{eq:layer_charact}
    M = \left\{ 
     v \mapsto 
     \sum_{i = 1}^k x_i \phi^i(v) + \sum_{j = 1}^\ell y_j \mathbbm{1}_{X_j} \;\middle|\; x_1, \dots, x_k, y_1, \dots, y_\ell \in \R 
    \right\},
\end{equation}
where $\phi^1, \dots, \phi^k$ span a subspace of $\Hom_G(V, \R^X)$, $X_1, \dots, X_\ell$ are the orbits of $X$ under the $G$-action, and $\1_{X_i} := \sum_{x \in X_i} e_x$ for $i = 1, \dots, \ell$. This formulation, while notation-heavy, plays a central role in the development of our main results.

We now present two working examples of equivariant affine maps and their associated neural spaces. These examples both reflect architectures commonly used in geometric deep learning and illustrate how standard models naturally conform to the structure in~\eqref{eq:layer_charact}. They will serve as recurring reference points throughout to highlight key phenomena in the universality landscape of equivariant networks.

\begin{figure}[H]
    \centering
    \begin{tcolorbox}[
        colback=figuregray,   
        colframe=white,       
        boxrule=0pt,          
        arc=3mm,              
        left=2mm, right=2mm, top=2mm, bottom=2mm, 
        width=\linewidth,     
        nobeforeafter         
    ]
    \begin{subfigure}[t]{0.23\textwidth}
        \centering
        \includegraphics[height=2.2cm]{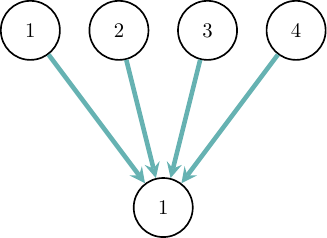}
        \caption{$\Aff_{S_4}(\R^4, \R)$}
        \label{fig:I}
    \end{subfigure}
    \hfill
    \begin{subfigure}[t]{0.23\textwidth}
        \centering
        \includegraphics[height=2.2cm]{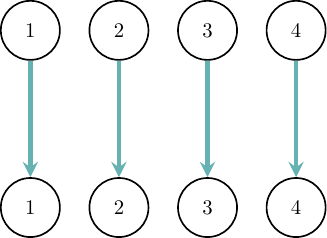}
        \caption{$C^1$}
        \label{fig:P}
    \end{subfigure}
    \hfill
    \hfill
    \begin{subfigure}[t]{0.23\textwidth}
        \centering
        \includegraphics[height=2.2cm]{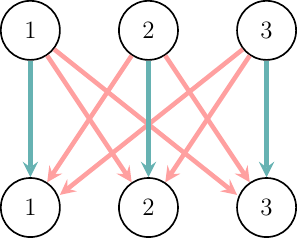}
        \caption{$\Aff_{S_4}(\R^4, \R^4)$}
        \label{fig:C}
    \end{subfigure}
    \begin{subfigure}[t]{0.23\textwidth}
        \centering
        \includegraphics[height=2.2cm]{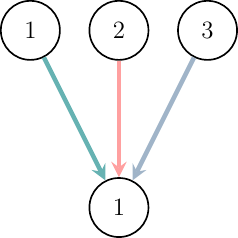}
        \caption{$\Aff(\R^3, \R)$}
        \label{fig:FC}
    \end{subfigure}
    \caption{Colored bipartite graphs illustrating the layer spaces used in Examples~\ref{example:deep-sets} and~\ref{example:1d-conv}, along with the graph corresponding to a standard fully connected layer, following the convention of \citet{ravanbakhsh_equivariance_2017} where arrows of the same color denote identical weights applied to different input values.}
    \label{fig:all_structures}
    \end{tcolorbox}
\end{figure}

\begin{example}[PointNets]
\label{example:deep-sets}
We focus on the sum-pooling variant of PointNet architectures~\citep{qi_pointnet_2017}, which are designed to process unordered collections, such as point clouds, by enforcing permutation equivariance.
An input configuration of $n$ elements with $f$-dimensional features is represented by a tensor $A \in \R^{n \times f}$, where each row corresponds to the features of a single object. 
Permuting the elements corresponds to permuting the rows of $A$, i.e., the indices along its first axis.
In our framework, the input tensor $A$ is modeled as an element of $\R^X \otimes \R^f$, where $X = [n]$ and the symmetric group $G = S_n$ acts on $X$ via its standard action and trivially on $\R^f$. 
PointNet architectures operate on such inputs using layers in the space $\Aff_{S_n}(\R^X \otimes \R^{f_{i-1}}, \R^X \otimes \R^{f_i})$, where each $\R^{f_i}$ corresponds to a space of $S_n$-invariant hidden features. 
Accordingly, the neural spaces corresponding to these equivariant architectures and their invariant counterparts take the following forms, respectively:
\[
\mathcal{N}_\sigma(\R^{X_0} \otimes \R^{f_0}, \dots, \R^{X_d} \otimes \R^{f_d}) 
\quad \text{and} \quad 
\mathcal{N}_\sigma(\R^{X_0} \otimes \R^{f_0}, \dots, \R^{X_{d-1}} \otimes \R^{f_{d-1}}, \R^{f_d}).
\]
\citet{zaheer_deep_2017} showed that understanding the structure of $\Aff_{S_n}(\R^X \otimes \R^{f_{i-1}}, \R^X \otimes \R^{f_i})$ reduces to understanding $\Aff_{S_n}(\R^X, \R^X)$.  
Identifying $\R^X$ with $\R^n$, they established that
\[
\Aff_{S_n}(\R^n, \R^n) = \left\{ 
v \mapsto (x_1 \, \mathrm{id} + x_2 \, \mathbbm{1} \mathbbm{1}^\top)v + y \mathbbm{1} 
\;\middle|\; x_1, x_2, y \in \R 
\right\},
\]
where $\mathbbm{1} = \mathbbm{1}_{[n]} = [1, \dots, 1]^\top$.
Figure~\ref{fig:P} shows the colored bipartite graph corresponding to the layer space $\Aff_{S_n}(\R^n, \R^n)$.
In the invariant case, $\Aff_{S_n}(\R^n, \R) = \left\{
v \mapsto x \, \mathbbm{1}^\top v + y 
\;\middle|\; x, y \in \R 
\right\}$,
which is consistent with the notation introduced in~\eqref{eq:layer_charact}.
Figure~\ref{fig:I} shows the colored bipartite graph corresponding to the layer space $\Aff_{S_n}(\R^n, \R)$.
\end{example}

\begin{example}[Convolutional Neural Networks]
\label{example:1d-conv}

Circular convolutional filters can be naturally formulated within the framework of permutation representations. 
For simplicity, we focus on the one-dimensional case.
Let $X = [n]$ and let $G = \mathbb{Z}_n$ act on $X$ by modular shifts. 
Identifying $\R^X$ with $\R^n$, the space $\Hom_{\Z_n}(\R^n, \R^n)$ corresponds to circulant matrices $A(x)$, each determined by a generating vector $x = (x_1, \dots, x_n) \in \R^n$, as shown below.

Each map in $\Aff_{\Z_n}(\R^n, \R^n)$ consists of a linear part defined by a circulant matrix and a bias term in $\R^n$:
\[
A(x) := 
\begin{bmatrix}
x_1 & x_n & x_{n-1} & \cdots & x_2 \\
x_2 & x_1 & x_n & \cdots & x_3 \\
x_3 & x_2 & x_1 & \cdots & x_4 \\
\vdots & \vdots & \vdots & \ddots & \vdots \\
x_n & x_{n-1} & x_{n-2} & \cdots & x_1 \\
\end{bmatrix} 
\text{ and }
y \mathbbm{1}_{X} = y \mathbbm{1}_{[n]} = y 
\begin{bmatrix}
    1 \\
    \vdots \\
    1 \\
\end{bmatrix}.
\]
Observe that any circulant matrix $A(x)$ can be written as a linear combination $A(e_1), \dots, A(e_n)$, that is, $A(x) = \sum_{i = 1}^n x_i A(e_i)$ where $\{e_1, \dots, e_n\}$ denotes the standard basis of $\R^n$.
Since limited-width convolutional filters are standard in practice, we restrict attention to the following maps:
\begin{equation}
    \label{eq:conv}
    C^k = \left\{ v \mapsto \sum_{i = 1}^k x_i A(e_i) v + y \mathbbm{1}_{[n]} \;\middle|\; x_1, \dots, x_k, y \in \R \right\}.
\end{equation}
This class can be seen as the one-dimensional analogue of the $k \times k$ convolutional kernels widely used in 2-D computer vision applications. The corresponding neural space is given by $\cN_\sigma(C^{k_1}, \dots, C^{k_d})$,
for a choice of filter sizes $1 \leq k_1, \dots, k_d \leq n$.
Circular invariant layers can be characterized as
\begin{equation}
    \label{eq:invariant-circ}
    I := \Aff_{\Z_n}(\R^n, \R) = \left\{ v \mapsto (x \, \mathbbm{1}^\top) \cdot v + y \;\middle|\; x, y \in \R \right\}.
\end{equation}
In particular, we focus on the spaces $C^1$, which correspond to convolutional filters of width one; see Figure~\ref{fig:C} for the colored bipartite graph representing the space $\Aff_{S_n}(\R^n, \R^n)$.

\end{example}

Having detailed the structure of the layer spaces and their correspondence to practical architectures, we now turn to the study of universality in families of shallow neural spaces.

\section{Universality in Shallow Neural Spaces}
\label{section:universality_classes}

\textbf{Universality Classes.} 
To establish notation and introduce the notion of universality classes, we begin by reformulating the classical universality result for shallow neural networks~\citep{pinkus_approximation_1999} in terms of our framework. 
Observe that the full class of shallow neural networks with variable width can be written as $\bigcup_{h \in \N} \cN_\sigma(\R^m, \R^h, \R)$. 
We denote by $\cU(\R^m, \R, \R)$ the associated universality class—namely, the set of continuous functions on $\R^m$ approximable by such networks. 
Formally, $\cU (\R^m, \R, \R)$ is defined as the closure of this union in $\cC(\R^m)$, equipped with the topology of uniform convergence on compact sets.

\begin{theorem}
    \label{th:pinkus_univ}
    The universality class for shallow neural networks, $\cU(\R^m, \R, \R)$, coincides with $\cC(\R^m)$ if and only if the activation function $\sigma$ is not a polynomial.
\end{theorem}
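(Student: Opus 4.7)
The statement is the classical Leshno--Pinkus universality theorem translated into the paper's notation: networks in $\cN_\sigma(\R^m, \R^h, \R)$ are precisely single--hidden--layer networks of the form $x \mapsto \sum_{i=1}^h c_i \sigma(w_i^\top x + b_i) + d$, so $\bigcup_h \cN_\sigma(\R^m, \R^h, \R)$ is the set of finite linear combinations of ridge functions $\sigma(w^\top x + b)$ plus a constant. I would prove the two directions separately.

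The plan for necessity (``if $\sigma$ is a polynomial, then $\cU(\R^m, \R, \R) \subsetneq \cC(\R^m)$'') is short. If $\sigma$ has degree at most $k$, then for every $w \in \R^m$ and $b \in \R$ the map $x \mapsto \sigma(w^\top x + b)$ is a polynomial in $x$ of degree at most $k$. Hence every element of $\bigcup_h \cN_\sigma(\R^m, \R^h, \R)$ lies in the space $\cP_k$ of polynomials of total degree at most $k$. Since $\cP_k$ is finite dimensional, it is closed in $\cC(\R^m)$ under uniform convergence on compact sets; therefore $\cU(\R^m, \R, \R) \subseteq \cP_k \subsetneq \cC(\R^m)$.

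For sufficiency (``if $\sigma$ is not a polynomial, then $\cU(\R^m, \R, \R) = \cC(\R^m)$'') I would follow the standard Leshno--Pinkus route in three steps. First, reduce to dimension one: by a standard projection/Radon-transform argument, density of ridge sums $\mathrm{span}\{\sigma(w^\top x + b) : w \in \R^m, b \in \R\}$ in $\cC(\R^m)$ follows from density of $\mathrm{span}\{\sigma(\alpha t + \beta) : \alpha,\beta \in \R\}$ in $\cC(\R)$. Second, assume temporarily that $\sigma \in C^\infty(\R)$ and is not a polynomial; then for every $k$ there exists $b_k \in \R$ with $\sigma^{(k)}(b_k) \neq 0$, and the divided difference
\[
\frac{1}{h^k}\sum_{j=0}^{k}(-1)^{k-j}\binom{k}{j}\sigma\!\left((b_k/\alpha + jh)\alpha\right)
\]
converges to $\alpha^k \sigma^{(k)}(b_k) t^k$ evaluated in a suitable sense as $h \to 0$, producing every monomial in the span's closure; density then follows from the Weierstrass theorem. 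Third, drop the smoothness assumption by mollifying: convolve $\sigma$ with a compactly supported smooth bump $\varphi$, observe that $\sigma * \varphi$ is smooth and can itself be uniformly approximated on compacta by finite linear combinations of translates of $\sigma$ (Riemann sums of the convolution integral), and show $\sigma * \varphi$ cannot be a polynomial for a generic choice of $\varphi$, since otherwise $\sigma$ would be a polynomial by taking $\varphi$ to approximate a Dirac delta.

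The main obstacle will be the third step, where one has to produce a mollifier $\varphi$ for which $\sigma * \varphi$ is both smooth and non-polynomial while being expressible as a limit of finite combinations of translates of $\sigma$; a one-line elegant proof does not seem available, and careful bookkeeping is needed to transfer approximation from $\sigma * \varphi$ back to $\sigma$. Since this entire argument is established in \citet{leshno_multilayer_1993} and \citet[Theorem~3.1]{pinkus_approximation_1999}, the cleanest presentation is to give the necessity direction in full and invoke these references for sufficiency.
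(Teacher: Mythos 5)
Your proposal is correct and takes essentially the same route as the paper, which states this theorem purely as a reformulation of the classical result and relies on \citet{pinkus_approximation_1999} (and \citet{leshno_multilayer_1993}) rather than giving an independent proof; your explicit necessity argument plus citation of those references for sufficiency is exactly the intended justification. The only quibble is that your divided-difference display as written converges to the constant $\alpha^k\sigma^{(k)}(b_k)$ rather than to $t^k\sigma^{(k)}(b)$ (the standard argument differences in the slope parameter $\alpha$ of $\sigma(\alpha t+b)$), but since you defer that step to the cited literature this does not affect correctness.
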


An analogous result in the equivariant setting was established by \citet{ravanbakhsh_universal_2020} for neural networks defined on representations $V$ and $W$ as input and output spaces, respectively, and with regular hidden representations of the form $\R^G$. 
We define the universality class $\cU(V, \R^G, W)$ as the set of functions in $\cC(V, W)$ that can be approximated by elements of $\bigcup_{h \in \N} \cN_\sigma(V, \R^G \otimes \R^h, W)$. 
Note that, in analogy with classical networks, the role of width is played by the hyperparameter $h$, which determines the dimension of the invariant hidden representation.
The results of \citet{ravanbakhsh_universal_2020} can then be stated as follows.

\begin{theorem}
\label{th:ravan_univ}
The universality class $\cU(V, \R^G, W)$ coincides with $\cC_G(V, W)$, the space of continuous $G$-equivariant functions from $V$ to $W$, if and only if the activation function $\sigma$ is not a polynomial.
\end{theorem}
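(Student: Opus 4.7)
The plan is to prove both directions separately, along classical lines. The ``only if'' direction inherits the polynomial obstruction from Theorem~\ref{th:pinkus_univ}: if $\sigma$ is a polynomial of degree $p$, every element of $\cN_\sigma(V, \R^G \otimes \R^h, W)$ is itself a polynomial of degree at most $p$, uniformly in $h$. The union over all $h$ therefore lies in a finite-dimensional (hence closed) subspace of $\cC(V,W)$, which is properly contained in $\cC_G(V,W)$ whenever the latter admits a non-polynomial equivariant function (generically so as soon as $W \neq 0$).

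For the forward direction, assume $\sigma$ is not a polynomial. The argument combines a structural description of equivariant maps to and from the regular representation $\R^G$ with a Reynolds averaging trick. By Frobenius reciprocity applied to $\R^G = \mathrm{Ind}_{\{1\}}^G \R$, evaluation $\phi \mapsto \langle e_1, \phi(\cdot)\rangle$ yields an isomorphism $\Hom_G(V, \R^G) \cong \Hom(V,\R)$, with inverse sending $\ell : V \to \R$ to the equivariant lift $v \mapsto \sum_{g \in G} \ell(g^{-1} v)\, e_g$; dually, $\Hom_G(\R^G, W) \cong W$, where $w \in W$ corresponds to $\sum_g a_g e_g \mapsto \sum_g a_g\, gw$. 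Applying these identifications on each of the $h$ copies and folding biases into the fixed-point space $W^G$, a typical element of $\cN_\sigma(V, \R^G \otimes \R^h, W)$ takes the explicit form
\[
v \;\mapsto\; \sum_{i=1}^{h} \sum_{g \in G} \sigma\bigl(\ell_i(g^{-1} v)\bigr)\, g\cdot w_i \;+\; c, \qquad c \in W^G,
\]
for affine maps $\ell_i : V \to \R$ and vectors $w_i \in W$.

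Given a target $f \in \cC_G(V,W)$, a compact set $K \subset V$, and $\varepsilon > 0$, I would first apply Theorem~\ref{th:pinkus_univ} coordinate-wise on $W$ to obtain a \emph{non-equivariant} approximation $\tilde f(v) = \sum_{i=1}^{h} \sigma(\ell_i(v))\, w_i$ with $\|f(v) - \tilde f(v)\| < \varepsilon$ uniformly on the (still compact) saturated set $G\cdot K$. Using the $G$-equivariance of $f$,
\begin{align*}
f(v) \;=\; \tfrac{1}{|G|}\sum_{g \in G} g^{-1} f(gv)
\;\approx\; \tfrac{1}{|G|}\sum_{g \in G} g^{-1} \tilde f(gv)
\;=\; \sum_{i=1}^{h} \sum_{g \in G} \sigma\bigl(\ell_i(g^{-1} v)\bigr)\; g\cdot \tfrac{w_i}{|G|},
\end{align*}
after re-indexing $g \mapsto g^{-1}$. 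The rightmost expression is exactly an element of $\cN_\sigma(V, \R^G \otimes \R^h, W)$ (with $c=0$), and the uniform error on $K$ is controlled by $\varepsilon$ via the triangle inequality over the finite group.

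The main obstacle is the structural identification in the second step: making the correspondence between non-equivariant ``templates'' $\ell : V \to \R$ and equivariant affine maps $V \to \R^G$ sharp enough that the averaging identity collapses cleanly into the canonical network form, including the correct handling of biases via $W^G$. Once this is in place, the remainder is a standard pairing of classical universal approximation with a finite-group average, with no delicate analysis beyond uniform convergence on the compact set $G\cdot K$. A minor point for the converse is that the degree bound on network outputs must be uniform in $h$; this holds because widening only adds parallel copies of the same depth-two, degree-$p$ computation.
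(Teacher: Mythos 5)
This theorem is not proved in the paper at all: it is quoted as a known result of Ravanbakhsh--Schneider (the paper's Theorem~\ref{th:ravan_univ} simply restates \citet{ravanbakhsh_universal_2020} in the paper's notation), so there is no internal proof to compare against. Your reconstruction is correct and is essentially the standard argument behind that result: the ``only if'' direction via the uniform degree bound (the union over $h$ sits in the finite-dimensional, hence closed, space of degree-$\le p$ polynomial maps, while $\cC_G(V,W)$ contains non-polynomial equivariant functions, e.g.\ $v \mapsto e^{\|v\|^2}\,\mathbbm{1}$ since permutation actions are orthogonal and $W^G \neq 0$); and the ``if'' direction by identifying $\Hom_G(V,\R^G) \cong V^*$ and $\Hom_G(\R^G,W) \cong W$ (your lift $v \mapsto \sum_g \ell(g^{-1}v)e_g$ and evaluation $e_g \mapsto gw$ are exactly right, and the hidden bias in $(\R^G\otimes\R^h)^G$ does fold into affine $\ell_i$), then group-averaging a classical Pinkus approximant of $f$ on the saturated compact $G\cdot K$. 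Two points you leave implicit and should state: the easy containment $\cU(V,\R^G,W) \subseteq \cC_G(V,W)$ (each network is equivariant and equivariance survives uniform limits on compacts), and the fact that the averaging step does not amplify the error because $G$ acts by permutation matrices, so each $g^{-1}$ has operator norm $1$; with those one-line additions the argument is complete and matches the approach in the cited source.
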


We aim to provide a definition of universality classes that encompasses the notions introduced in Theorem~\ref{th:pinkus_univ} and Theorem~\ref{th:ravan_univ}, while being general enough to cover a wider range of architectures, such as PointNets and CNNs with variable filter size.  
To this end, we introduce the following auxiliary notation.
Let $G$ be a finite group and let $V$, $W$ and $Z$ be permutation representations of $G$. 
Let $M$ be a subspace of $\Aff_G(V, W)$ and $N$ a subspace of $\Aff_G(W, Z)$, as defined in \eqref{eq:layer_charact}.
Then, for each $h \in \N$, we define $M_h$ as the subspace of $\Aff_G(V, W \otimes \R^h)$ given by
\begin{equation}
    \label{eq:def_m_h}
    M_h := \left\{ x \mapsto (f_1(x), \dots, f_h(x)) \mid f_1, \dots, f_h \in M \right\},
\end{equation}
and define $_h N$ as the subspace of $\Aff_G(W \otimes \R^h, Z)$ given by
\begin{equation}
    \label{eq:def__hN}
    _h N := \left\{ (x_1, \dots, x_h) \mapsto g_1(x_1) + \cdots + g_h(x_h) \mid g_1, \dots, g_h \in N \right\}.
\end{equation}
Recalling the isomorphism $W \otimes \R^h \cong (W)^{\oplus h}$, note that in the special cases where $M = \Aff_G(V, W)$ and $N = \Aff_G(W, Z)$, we have $M_h \cong \Aff_G(V, W \otimes \R^h)$ and $_h N \cong \Aff_G(W \otimes \R^h, Z)$.
With this notation in place, we can now provide a general definition of universality classes.

\begin{definition}[Universality Classes]
    The universality class $\cU(M, N)$ associated with a family of neural spaces $\cN_\sigma(M_h, _h N)$ for $h \in \N$ is the set of continuous functions approximated by these neural networks.
    More formally, $\cU(M, N)$ is defined as the closure of $\bigcup_{h \in \N} \cN_\sigma(M_h, _h N)$ in $\cC(V, Z)$, equipped with the topology of uniform convergence on compact sets.  
    As in the case of neural spaces, when $M = \Aff_G(V, W)$ and $N = \Aff_G(W, Z)$, we will simply write $\cU(V, W, Z)$.
\end{definition}

However, comparing different universality classes is particularly challenging, and in the literature, separation power has often been used as a proxy for this purpose.
The next section revisits this notion and critically examines its adequacy as a surrogate for universality.

\textbf{On Separation-Constrained Universality}.
\label{section:separation} Theorem~\ref{th:ravan_univ} establishes that equivariant neural networks cannot approximate all continuous functions. 
In particular, invariant networks are inherently unable to distinguish between symmetric inputs---a limitation that naturally constrains the class of functions they can represent. 
To make this precise, we formally define the notion of separation and the concept of \emph{separation-constrained universality}.

\begin{definition}[Separation-Constrained Universality]
A family of functions $\cN \subseteq \{f: X \to Y\}$ is said to \emph{separate} $\alpha$ and $\beta$ if there exists $f \in \cN$ such that $f(\alpha) \neq f(\beta)$.
The set of point pairs not separated by $\cN$ defines an equivalence relation:
\[
\rho(\cN) = \{ (\alpha, \beta) \in X \times X \mid f(\alpha) = f(\beta) \text{ for all } f \in \cN \}.
\]
A family $\cN$ is said to be \emph{separation-constrained universal} if its relative universality class coincides with the entire set of continuous functions that respect the equivalence relation $\rho(\cN)$, that is,
\[
\cC_{\rho}(X, Y) = \{ f \in \cC(X, Y) \mid f(x) = f(y) \text{ for all } (x, y) \in \rho(\cN) \}.
\]
\end{definition}

It is a standard fact in approximation theory~\citep{devore_constructive_1993} that if a family of functions $\cN$ fails to separate two points, then it cannot approximate any function that does. 
As such, separation-constrained universality captures the maximal expressivity achievable by $\cN$. 
Here, we aim to investigate whether separation alone suffices to characterize expressivity i.e., whether universality classes with the same separation power must necessarily coincide. 
To this end, we now present three network families that share the same separation relation, despite differing in their internal representations. 
Throughout the remainder of the paper, we assume that all activation functions $\sigma: \R \to \R$ are non-polynomial.

\begin{prop}
\label{prop:first_inclusion}
    \footnote{For clarity of presentation, all proofs are deferred to the Appendix, with the exception of Proposition~\ref{prop:second_inclusion}.}
    Let $C^1$ be defined as in~\eqref{eq:conv}, representing convolutional filters of width $1$, and let $I$ be as defined as in~\eqref{eq:invariant-circ}, representing invariant circular layers. Let $S_n$ act on $\R^n \cong \R^{[n]}$ via the standard permutation action.
    Then, the following universality classes have the same separation power:
    \[
        \rho\left( \cU(C^1, I) \right) = 
        \rho\left( \cU (\R^n, \R^n, \R) \right) = 
        \rho\left( \cU (\R^n, \R^{S_n}, \R) \right).
    \]
\end{prop}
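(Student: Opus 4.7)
The plan is to show that all three universality classes share the same separation relation, namely the $S_n$-orbit equivalence on $\R^n$, by establishing matching upper and lower bounds on their separation power.

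\textbf{Upper bound.} First, I would verify that each class is contained in $\cC_{S_n}(\R^n, \R)$. For $\cU(\R^n, \R^{S_n}, \R)$ this is exactly Theorem~\ref{th:ravan_univ}; for $\cU(\R^n, \R^n, \R)$ it follows because each layer is $S_n$-equivariant by construction. The interesting case is $\cU(C^1, I)$: although introduced in the $\Z_n$-equivariant setting, the key observation is that $A(e_1)$ is the identity matrix, so $C^1 = \{v \mapsto x_1 v + y \mathbbm{1}\}$ actually consists of $S_n$-equivariant maps; likewise $I$ coincides with $\Aff_{S_n}(\R^n, \R)$. Consequently each of the three separation relations contains the $S_n$-orbit equivalence.

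\textbf{Reduction via containment.} These inclusions also yield the chain $\cU(C^1, I) \subseteq \cU(\R^n, \R^n, \R) \subseteq \cU(\R^n, \R^{S_n}, \R)$, where the last step uses Theorem~\ref{th:ravan_univ}. Since $\cN_1 \subseteq \cN_2$ implies $\rho(\cN_1) \supseteq \rho(\cN_2)$, it suffices to prove the matching lower bound---that the separation relation is also contained in the $S_n$-orbit equivalence---for the smallest class $\cU(C^1, I)$.

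\textbf{Lower bound and main obstacle.} Unwinding the definitions shows that every network in $\cN_\sigma(C^1_h, I_h)$ computes $f(v) = b + \sum_i \hat\phi(v_i)$, where $\hat\phi(t) = \sum_{j=1}^h a_j \sigma(x_j t + y_j)$ is an arbitrary shallow scalar MLP. Given $v \not\sim_{S_n} w$, the empirical measures $\sum_i \delta_{v_i}$ and $\sum_i \delta_{w_i}$ are distinct, so a standard Riesz-type argument produces a continuous $\phi$ with $\sum_i \phi(v_i) \neq \sum_i \phi(w_i)$; approximating $\phi$ uniformly on a compact set containing all the $v_i$ and $w_i$ by such an MLP via Theorem~\ref{th:pinkus_univ} then yields a network in $\cU(C^1, I)$ separating $v$ from $w$. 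The conceptual crux---and the main obstacle---is recognizing that filter-width-$1$ convolutions, though specified via $\Z_n$, are automatically $S_n$-equivariant; without this observation the three classes would not even be comparable, and their separation relations would not collapse to a single $S_n$-orbit equivalence.
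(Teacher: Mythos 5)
Your proof is correct and follows essentially the same sandwich argument as the paper: establish the chain $\cU(C^1, I) \subseteq \cU(\R^n, \R^n, \R) \subseteq \cU(\R^n, \R^{S_n}, \R)$, note that all three classes consist of $S_n$-invariant functions, and show that the smallest class already separates distinct $S_n$-orbits, forcing all three separation relations to coincide with orbit equivalence. The only difference is in how that last step is justified: you unwind networks in $\cN_\sigma(C^1_h, I_h)$ into the form $b + \sum_i \hat\phi(v_i)$ and separate distinct coordinate multisets directly via a continuous function plus classical universality, whereas the paper invokes its general Proposition~\ref{prop:separation-charact} (the multiset-of-basis-map-values characterization, with basis maps $e_1^\top, \dots, e_n^\top$), which amounts to the same computation in this special case.
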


This naturally raises the following question.

\begin{quest}
    \label{quest:main}
    Are these universality classes equal as well? More generally, is separation a complete proxy for comparing universality classes?
\end{quest}

We answer Question~\ref{quest:main} in the negative via Proposition~\ref{prop:second_inclusion}, after developing the necessary theory.

\section{Main Results}

\label{section:main}

In this section, we characterize the universality classes of invariant shallow neural networks (Section~\ref{section:charact}) and compare them (Section~\ref{section:failure}). 
Although the characterization is restricted to the invariant case, the following remark shows that it can be used to demonstrate that non-approximation in the invariant setting implies failure in the equivariant case as well.

\begin{remark}
    \label{rmk:eq2inv}
    Let $\mathcal U_1 \subseteq \mathcal U_2 \subseteq \cC_G(V, W)$ be two universality classes with input space $V$ and output space $W$.
    Let $\pi: W \to W^G$ denote the projection onto the trivial component of $W$. Define the pullback map
    \[
    \pi^*:
    \begin{array}{ccc}
    \cC_G(V, W) & \longrightarrow & \cC_G(V, W^G) \\
    f & \longmapsto & \pi \circ f,
    \end{array}
    \]
    where $\cC_G(V, W)$ denotes the space of continuous equivariant functions from $V$ to $W$.
    Then, $\pi^*(\mathcal U_1) \subsetneq \pi^*(\mathcal U_2)$ implies $\mathcal U_1 \subsetneq \mathcal U_2$, since $\pi^*$ is a continuous linear operator.
    This shows that a sufficient condition for strict inclusion between spaces of invariant networks also yields a sufficient condition for strict inclusion between the corresponding spaces of equivariant networks.
\end{remark}
With this observation, we now restrict our attention to invariant networks without loss of generality.

\subsection{Characterization of Universality Classes}
\label{section:charact}

To characterize the universality classes of invariant shallow networks, we begin by introducing the notion of a \emph{basis map}.

\begin{definition}[Basis maps]
    \label{def:basis_maps}
    As defined in~\eqref{eq:layer_charact}, let $M$ be a subspace of $\Aff_G(V, \R^Y)$, where $V$ is a permutation representation and $Y$ is a finite $G$-set of cardinality $\ell$, which we identify with $[\ell]$.
    Let $\phi^1, \dots, \phi^m$ be a basis for the linear part of $M$, and for each $i \in Y$, define the linear maps
    \begin{align}
        \label{eq:low_phi}
        \phi_i :
        \begin{gathered}
            \R^X \to \R^m \\
            x \mapsto (\phi^1_i(x), \dots, \phi^m_i(x)).
        \end{gathered}
    \end{align}
    We refer to the maps $\phi_1, \dots, \phi_\ell$ as the \emph{basis maps} associated with $M$ or its basis $\phi^1, \dots, \phi^m$.  
\end{definition}

We now state the central characterization theorem for universality classes in terms of differential constraints on invariant functions.

\begin{theorem}
    \label{th:charact_diff}
    Let $M$ and $N$ be, respectively, subspaces of $\Aff_G(V, W)$ and $\Aff_G(W, \R)$.
    Let $f$ be an invariant function, then $f \in \cU(M, N)$ if and only if $P(\partial_1, \dots, \partial_d) f = 0$ for every polynomial $P$ that vanishes on the spaces spanned by the rows $\phi^1_i, \dots, \phi^m_i$ of each basis map $\phi_1, \dots, \phi_\ell$, see \eqref{eq:low_phi}.
\end{theorem}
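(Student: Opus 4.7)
The plan is to prove both inclusions by viewing $\cU(M, N)$ as the closure in $\cC(V)$ of a superposition of ridge functions whose weight vectors are constrained to the subspaces $U_i := \Sp\{\phi^1_i, \dots, \phi^m_i\} \subseteq V$. Combining the layer decomposition~\eqref{eq:layer_charact} for $M$ with its invariant analogue for $N$, every $f \in \cN_\sigma(M_h, N_h)$ can be written as
\[
f(x) = \sum_{j=1}^h \sum_{i \in Y} \psi_{j,i}\,\sigma\!\left(\langle a_j, \phi_i(x)\rangle + v_{j,i}\right) + c,
\]
where the output weights $\psi_{j,i}$ and inner biases $v_{j,i}$ are constant along the $G$-orbits of $Y$. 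Each neuron is thus a ridge function $\sigma(\langle w, x\rangle + b)$ with $w = \phi_i^\top a_j \in U_i$.

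For the necessity direction, fix a polynomial $P = \sum_k P_k$ (decomposed into homogeneous parts) vanishing on each $U_i$. Since each $U_i$ is a linear cone, $P(tw) = \sum_k t^k P_k(w)$ must vanish identically in $t$ for $w \in U_i$, so every $P_k$ also vanishes on $U_i$. A direct chain-rule computation then gives, distributionally, $P_k(\partial)\sigma(\langle w, x\rangle + b) = P_k(w)\,\sigma^{(k)}(\langle w, x\rangle + b) = 0$ for $w \in U_i$, and summing over neurons yields $P(\partial) f = 0$. Continuity of $P(\partial)$ as a distributional operator with respect to uniform convergence on compacta extends this identity to the closure $\cU(M, N)$.

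For the sufficiency direction, let $f \in \cC(V)$ be invariant with $P(\partial) f = 0$ for every polynomial $P$ vanishing on $U := \bigcup_i U_i$. I would invoke a ridge-density theorem (a Pinkus/Vostrecov--Kreines-type consequence of the non-polynomial hypothesis on $\sigma$): the closure in $\cC(V)$ of $\Sp\{\sigma(\langle w, \cdot\rangle + b) : w \in U,\, b \in \R\}$ coincides with the space of continuous $f$ annihilated by all such $P(\partial)$. Hence $f = \lim f_n$ for some sequence of finite combinations $f_n$ of such ridge functions. Applying the Reynolds projector $Rf(v) := |G|^{-1}\sum_{g \in G} f(gv)$---continuous on $\cC(V)$---together with the invariance of $f$ gives $Rf_n \to Rf = f$. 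The equivariance identity $g \cdot \phi_i^\top a = \phi_{gi}^\top a$ then yields
\[
R\!\left(\sigma(\langle \phi_i^\top a, \cdot \rangle + b)\right) = \frac{1}{|Gi|}\sum_{j \in Gi} \sigma\!\left(\langle a, \phi_j(\cdot)\rangle + b\right),
\]
which is precisely the contribution of a single neuron with orbit-supported output weights and invariant bias $b$. Therefore each $Rf_n$ lies in $\bigcup_h \cN_\sigma(M_h, N_h)$, giving $f \in \cU(M, N)$.

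The main technical obstacle is the ridge-density theorem invoked in the sufficiency step: proving that the closure of the span of ridge functions whose directions range over the algebraic set $U = \bigcup_i U_i$ coincides with the kernel of all $P(\partial)$ with $P$ vanishing on $U$. A complete argument requires either a distributional Fourier step localizing the spectrum of limit functions to $U$, or a direct construction via derivatives of $\sigma$ at a generic bias to recover all polynomials along each $U_i$. Interpreting $P(\partial) f = 0$ distributionally when $f$ and $\sigma$ are merely continuous, and checking that $N$'s linear part is rich enough (e.g., contains the orbit-sum functionals $\mathbbm{1}_{Y_r}^*$) so that Reynolds-symmetrized ridge functions lie in the admissible network space, are the remaining delicate points.
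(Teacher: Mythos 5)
Your proposal is correct and follows essentially the same route as the paper: reduce networks in $\cN_\sigma(M_h,N_h)$ to (symmetrized) superpositions of ridge functions with directions in $L(\phi_i)=\Sp\{\phi^1_i,\dots,\phi^m_i\}$, pass between general and $G$-symmetric superpositions via the Reynolds operator, and characterize the closure through polynomials vanishing on $\bigcup_i L(\phi_i)$ acting as distributional differential operators. The ``ridge-density theorem'' you flag as the main obstacle is exactly what the paper imports wholesale from \citet{pinkus_ridge_2015} (Theorem~6.9, combined with Proposition~\ref{prop:nn2ridge}), so it is a citation rather than a gap.
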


Here, we assume $d = \dim V$, and let $P(\partial_1, \dots, \partial_d)$ denote the constant-coefficient linear differential operator associated with the polynomial $P$. 
The derivatives $\partial_i$ on $V$ are interpreted in the distributional sense; see~\citep{grubb_g_distributions_2009} for details.

Although Theorem~\ref{th:charact_diff} provides a complete characterization of the universality classes for arbitrary families of neural spaces, this generality may come at the cost of practicality. 
Indeed, computing the exact set of polynomials $P$ can be particularly challenging, due to the combinatorial complexity arising from the intersections of the subspaces spanned by $\phi^1_i, \dots, \phi^m_i$.
Nonetheless, the theorem is not merely of theoretical interest---it plays a central role in deriving sufficient conditions for universality failure. These conditions enable a principled comparison of the approximation power of distinct model families, as we explore in the following sections.

\subsection{Sufficient Conditions for Universality Failure}
\label{section:failure}

In this section, we present two sufficient conditions for the failure of separation-constrained universality. 
These results will be used to resolve Question~\ref{quest:main} and to prove Proposition~\ref{prop:second_inclusion}. 
We begin with Theorem~\ref{th:first_suff_diff}, which provides a general---but more difficult to verify---criterion, followed by Theorem~\ref{th:suff_cond}, a less general version that is simpler to apply, despite its more convoluted appearance.

First, we introduce the notion of a directional derivative. For each vector $c = (c_1, \dots, c_n) \in \R^n$, the directional derivative is defined as the differential operator $D_c = c_1 \cdot \partial_1 + \dots + c_n \cdot \partial_n$.

\begin{theorem}
    \label{th:first_suff_diff}
    A continuous function $f$ does not belong to the class $\cU(M, N)$ if
    \begin{equation}
        \label{eq:derivative}
        D_{c_{1}} \cdots D_{c_{\ell}} f \neq 0
    \end{equation}
    for some choice of $c_{\alpha}$ in $\ker(\phi_\alpha^\top)$ for each basis map $\phi_1, \dots, \phi_\ell$.
\end{theorem}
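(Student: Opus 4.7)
The plan is to derive Theorem~\ref{th:first_suff_diff} as a direct corollary of the characterization in Theorem~\ref{th:charact_diff}, by exhibiting an explicit polynomial that lies in the ideal tested by that theorem and acts nontrivially on $f$ via its associated constant-coefficient differential operator.

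Given the hypothesis, for each $\alpha = 1, \dots, \ell$ we have a vector $c_\alpha \in \ker(\phi_\alpha^\top) \subseteq V$, which I would first unpack: under the identification of $V \cong \R^d$ with its dual via the standard inner product, this condition is equivalent to $c_\alpha \in V_\alpha^\perp$, where $V_\alpha := \mathrm{span}(\phi^1_\alpha, \dots, \phi^m_\alpha) \subseteq V$ is the row span of the basis map $\phi_\alpha$. In particular $\langle c_\alpha, \phi^j_\alpha \rangle = 0$ for every $j = 1, \dots, m$.

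With this in place, I would consider the degree-$\ell$ polynomial
\[
P(T_1, \dots, T_d) := \prod_{\alpha = 1}^{\ell} \langle c_\alpha, T \rangle.
\]
Two facts follow immediately. First, $P$ vanishes on every $V_\alpha$: for any $v \in V_\beta$, the factor $\langle c_\beta, v \rangle = 0$ by orthogonality, which kills the whole product. Hence $P$ lies in the ideal of polynomials vanishing on all the subspaces $V_1, \dots, V_\ell$, which is precisely the ideal tested by Theorem~\ref{th:charact_diff}. Second, the constant-coefficient differential operator associated to $P$ is
\[
P(\partial_1, \dots, \partial_d) = \prod_{\alpha = 1}^{\ell} \sum_{i=1}^{d} (c_\alpha)_i \, \partial_i = D_{c_1} \cdots D_{c_\ell},
\]
where commutativity of partial derivatives ensures the product is unambiguous.

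The conclusion then follows by contrapositive: by hypothesis, $P(\partial) f = D_{c_1} \cdots D_{c_\ell} f \neq 0$, so Theorem~\ref{th:charact_diff} rules out $f \in \cU(M, N)$. I anticipate no serious technical obstacle here, since the substantive content is entirely carried by Theorem~\ref{th:charact_diff}. The small point requiring some care is the notational identification $\ker(\phi_\alpha^\top) \cong V_\alpha^\perp$, together with the observation that a product of linear forms vanishes on a subspace as soon as a single factor does — a much coarser (and easier to test) sufficient condition than describing the full vanishing ideal, which is exactly why Theorem~\ref{th:first_suff_diff} is phrased as sufficient rather than necessary.
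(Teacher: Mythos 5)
Your proposal is correct and takes essentially the same route as the paper: both deduce the theorem from Theorem~\ref{th:charact_diff} by exhibiting the product polynomial $(c_1^\top x)\cdots(c_\ell^\top x)$, noting it vanishes on every row-span of the basis maps (the paper phrases this via the ideal product $\cI(L(\phi_1))\cdots\cI(L(\phi_\ell))\subseteq\cI(L(\phi_1)\cup\dots\cup L(\phi_\ell))$, you check the vanishing factor-by-factor, which is the same observation), identifying its differential operator with $D_{c_1}\cdots D_{c_\ell}$, and concluding by contrapositive. The only cosmetic difference is that the paper cites the final step of the proof of Theorem~\ref{th:charact_diff} so as not to rely on $f$ being invariant, whereas you invoke the theorem statement directly; this is harmless since any element of $\cU(M,N)$ is invariant.
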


In the case of equivariant networks where each affine layer is allowed to be an arbitrary equivariant affine map, Theorem~\ref{th:first_suff_diff} can be strengthened as follows.

\begin{theorem}
    \label{th:suff_cond}
    Let $M = \Aff_G(V, W)$ and $N = \Aff_G(W, \mathbb{R})$, where $V$ and $W$ are permutation representations. 
    Let $\phi_1, \dots, \phi_\ell$ denote the basis maps associated with $M$, see~\eqref{eq:low_phi}.
    Then, the universal class $\cU (M, N)$ fails to be separation-constrained universal if, for some choice of:
    \begin{itemize}
        \item integers $s_1, \dots, s_\ell \in \{0, \dots, \ell\}$ satisfying $s_1 + \dots + s_\ell = \ell$,
        \item integers $a_1 > \ell$ and $a_i + \ell < a_{i+1}$ for each $i = 1, \dots, \ell$,
        \item vectors $c_i \in \ker(\phi_i^\top)$ for each $i = 1, \dots, \ell$,
    \end{itemize}
    Let $i_1, \dots, i_r$ be the indices such that $s_{i_j} \neq 0$.
    The following expression is nonzero:
    \[
        \sum_{\sigma \in S_\ell} 
        \frac{a_{i_1}!}{s_{i_1}!} \cdots \frac{a_{i_r}!}{s_{i_r}!} 
        (c_{\sigma(1), 1}  \cdots c_{\sigma(s_1), 1}) \cdot (c_{\sigma(s_1 + 1), 2}  \cdots c_{\sigma(s_1 + s_2), 2}) \cdots  (c_{\sigma(\ell-s_\ell), \ell} \cdots c_{\ell, \ell}).
    \]
\end{theorem}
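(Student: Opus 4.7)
The plan is to reduce the statement to Theorem~\ref{th:first_suff_diff}, whose criterion only asks for a single continuous $G$-invariant function $f$ with $D_{c_1} \cdots D_{c_\ell} f \neq 0$ for the prescribed $c_\alpha \in \ker(\phi_\alpha^\top)$. Since $M = \Aff_G(V,W)$ and $N = \Aff_G(W,\R)$ are full layer spaces, the separation relation $\rho(\cU(M,N))$ is just the $G$-orbit equivalence, so any continuous $G$-invariant function lies in $\cC_\rho(V,\R)$. The whole proof therefore consists in writing down an explicit invariant test polynomial whose mixed directional derivative can be evaluated in closed form and matches the displayed sum.

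First I would choose a test function of the form $f(v) = \sum_{g \in G} \prod_{k=1}^\ell L_k(g \cdot v)^{a_k}$, where each $L_k$ is a linear form on $V$ built from the $k$-th row of the basis maps, so that pairing $L_k$ with $c_i$ reproduces the scalar $c_{i,k}$ appearing in the statement. Averaging over $G$ gives $G$-invariance of $f$, and the spacing hypotheses are used exactly once each: the bound $a_1 > \ell$ prevents any of the $\ell$-fold derivatives from annihilating a factor $L_k^{a_k}$, while the separation $a_{i+1} > a_i + \ell$ ensures that the residual monomials $\prod_k L_k(v)^{a_k - s_k}$ produced by distinct partitions $(s_1,\dots,s_\ell)$ of $\ell$ sit in linearly independent graded components of the polynomial ring and therefore cannot cancel against one another when we restrict to a generic base point.

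Next I would compute $D_{c_1} \cdots D_{c_\ell} f$ using the multivariate Leibniz rule: distributing the $\ell$ operators among the $\ell$ factors is parametrised by a permutation $\sigma \in S_\ell$ together with block sizes $s_1, \dots, s_\ell$, and each factor $L_k^{a_k}$ hit by $s_k$ derivatives contributes a falling factorial together with the product of scalars $c_{\sigma(j),k}$ for $j$ in the $k$-th block. Repackaging the multinomial $\ell!/(s_1!\cdots s_\ell!)$ into the sum over $\sigma$ and combining it with the falling factorials gives, for each fixed tuple $(s_1,\dots,s_\ell)$, precisely the normalisation $\prod_{j} a_{i_j}!/s_{i_j}!$ and the permanent-like block displayed in the theorem. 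The kernel condition $c_\alpha \in \ker(\phi_\alpha^\top)$ is then invoked to annihilate the cross-terms in which a derivative $D_{c_\alpha}$ falls on a factor paired with the row indexed by $\alpha$, leaving only the surviving permanent-like expression.

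To conclude, I would evaluate the identity at a point $v_0$ where the residual monomials are nonzero; by the linear-independence argument enabled by the degree spacing, the full derivative vanishes at $v_0$ only if the displayed sum vanishes, which is excluded by hypothesis. Hence $D_{c_1} \cdots D_{c_\ell} f \not\equiv 0$, Theorem~\ref{th:first_suff_diff} applies, and $f \in \cC_\rho(V,\R) \setminus \cU(M,N)$, so $\cU(M,N)$ fails separation-constrained universality. The hard part will be the combinatorial bookkeeping in the Leibniz expansion: aligning the index $\sigma$ in the displayed sum with the assignment of derivatives to factors, matching the coefficient $a_k!/s_k!$ (rather than the more immediate falling factorial $a_k!/(a_k-s_k)!$) by absorbing the multinomial into the sum over $S_\ell$, and verifying that the kernel hypothesis is exactly what kills every term not of the claimed form.
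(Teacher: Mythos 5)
Your overall strategy coincides with the paper's: reduce to Theorem~\ref{th:first_suff_diff}, take a symmetrized monomial with widely spaced exponents as the test function, and identify the displayed permanent-like sum as a coefficient of its mixed directional derivative via a Leibniz expansion. However, there is a genuine gap at the very first step. You justify that your test function lies in the separation-constrained class by asserting that, because $M$ and $N$ are full layer spaces, $\rho(\cU(M,N))$ is exactly $G$-orbit equivalence, so any $G$-invariant function qualifies. That assertion is neither proved nor true in general: by Proposition~\ref{prop:separation-charact}, the non-separation relation of these shallow networks is equality of the multisets $\{\{\phi_1(x),\dots,\phi_\ell(x)\}\}$ of basis-map values, which can be strictly coarser than orbit equivalence even for full layer spaces (for instance $V=\R^n$ with the standard $S_n$-action and $W=\R$ the trivial permutation representation gives a single basis map $x\mapsto \1^\top x$, so $\rho$ only records coordinate sums). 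To witness failure of \emph{separation-constrained} universality you must produce a non-approximable function that \emph{respects} $\rho$; a merely $G$-invariant function that separates more than the networks do is trivially non-approximable and proves nothing. The paper avoids this by choosing the target $F(x)=P(e_{\alpha_1}^\top x,\dots,e_{\alpha_\ell}^\top x)$, where $P$ is the $S_\ell$-symmetrization of $x_1^{a_1}\cdots x_\ell^{a_\ell}$ and $e_{\alpha_i}^\top$ is a component of the basis map $\phi_i$; such an $F$ lies in the $\rho$-respecting class by Proposition~\ref{prop:separation-charact}, however coarse $\rho$ is. Your Reynolds average $\sum_{g\in G}\prod_k L_k(g\cdot v)^{a_k}$ is not shown to have this property.

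Two further points. First, your non-cancellation argument is tailored to monomials in fixed coordinates: with the $G$-average, the summands are products of powers of varying linear forms $L_k(g\cdot{})$, all terms share the same exponent pattern $(a_1,\dots,a_\ell)$, and products of powers of non-coordinate linear forms do not live in ``linearly independent graded components,'' so the spacing $a_{i+1}>a_i+\ell$ does not by itself exclude cancellation across different group elements. In the paper's construction every surviving term after differentiation is a genuine monomial with a distinct multidegree (this is exactly what $a_1>\ell$ and $a_{i+1}>a_i+\ell$ buy), so a single nonzero coefficient --- the displayed sum --- certifies $D_{c_1}\cdots D_{c_\ell}F\neq 0$. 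Second, the kernel condition $c_i\in\ker(\phi_i^\top)$ is not there to ``annihilate cross-terms'' in the Leibniz expansion; it is the hypothesis needed for Theorem~\ref{th:first_suff_diff} to apply at all (it places $(c_1^\top x)\cdots(c_\ell^\top x)$ in the relevant vanishing ideal). These issues are fixable by switching to the paper's choice of test function, but as written the proposal does not establish the theorem.
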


\section{The Heterogeneous Landscape of Universality Classes}
\label{sec:heterogenous}

We now apply the tools developed in Section~\ref{section:main} to investigate the structure of universality classes and illustrate their heterogeneity. 
In Section~\ref{section:examples_of_failure}, we address Question~\ref{quest:main} by applying Theorems~\ref{th:first_suff_diff} and~\ref{th:suff_cond} to exhibit concrete examples of failure. In contrast, Section~\ref{section:examples_of_universality} presents Theorem~\ref{th:rav_gen}, a generalization of Theorem~\ref{th:ravan_univ}, which provides sufficient conditions for achieving separation-constrained universality---highlighting the diversity of behaviors even within fixed symmetry classes.

\subsection{Examples of Failure}
\label{section:examples_of_failure}

\begin{prop}
    \label{prop:second_inclusion}
    As established in Proposition~\ref{prop:first_inclusion}, the following spaces achieve the same separation power, yet differ in their approximation capabilities when $n > 2$:
    \[
        \cU (C^1, I)
        \; \subsetneq \; 
        \cU (\mathbb{R}^n, \mathbb{R}^n, \mathbb{R}) 
        \; \subsetneq \; 
        \cU (\mathbb{R}^n, \mathbb{R}^{S_n}, \mathbb{R}).
    \]
    By Remark~\ref{rmk:eq2inv}, the corresponding equivariant models also have distinct approximation power.
\end{prop}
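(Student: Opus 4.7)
The plan is to establish each strict inclusion by exhibiting an explicit witness in the larger class and verifying non-membership in the smaller via the differential criteria of Theorems~\ref{th:first_suff_diff} and~\ref{th:suff_cond}. The non-strict inclusions are routine: a map in $C^1$, of the form $v \mapsto x_1 v + y \1$, is already $S_n$-equivariant because the identity and $\1$ commute with every permutation, so $C^1 \subseteq \Aff_{S_n}(\R^n, \R^n)$ while $I = \Aff_{S_n}(\R^n, \R)$; layerwise containment transfers to the closures. The second containment follows from Theorem~\ref{th:ravan_univ}, which identifies $\cU(\R^n, \R^{S_n}, \R)$ with the full class $\cC_{S_n}(\R^n, \R)$.

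For the first strict inclusion I propose the witness $f(v) = s^n$ with $s = \1^\top v$. It lies in $\cU(\R^n, \R^n, \R)$ trivially, since $f = \sum_i F(v_i, s)$ for $F(x, s) := s^n / n$ (constant in $x$). To exclude $f$ from $\cU(C^1, I)$ I apply Theorem~\ref{th:first_suff_diff}: for $C^1$ the only nontrivial linear basis element is $\phi^1 = \mathrm{id}$, so $\ker \phi_i = \{c \in \R^n : c_i = 0\}$, and choosing $c_k = \1 - e_k$ in each such kernel gives
\[
  D_{c_1}\cdots D_{c_n}\, s^n \;=\; n!\, \prod_{k=1}^n \1^\top c_k \;=\; n!\,(n-1)^n \neq 0
\]
for $n \geq 2$, since every $n$-fold partial derivative of $s^n$ equals $n!$. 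Hence $f \notin \cU(C^1, I)$.

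For the second strict inclusion I invoke Theorem~\ref{th:suff_cond} with $M = \Aff_{S_n}(\R^n, \R^n)$ and $N = \Aff_{S_n}(\R^n, \R)$, whose basis maps $\phi_i(v) = (v_i, s)$ have kernels $\{c \in \R^n : c_i = 0,\ \1^\top c = 0\}$ of positive dimension exactly when $n > 2$. My proposal is to take the partition $s = (n-1, 1, 0, \dots, 0)$ of $\ell = n$, any integers $a_1, a_2$ with $a_1 > n$ and $a_1 + n < a_2$, and the vectors $c_1 = e_2 - e_3$, $c_2 = e_1 - e_3$, $c_i = e_1 - e_2$ for $i \geq 3$, all lying in the appropriate kernels and satisfying $c_{i,1} = 1$ for $i \geq 2$ and $c_{1,2} = 1$. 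Because each $c_{i,i} = 0$, every term in the combinatorial sum of Theorem~\ref{th:suff_cond} vanishes except those with $\sigma(n) = 1$; the surviving inner sum is $(n-1)!\,\prod_{i \geq 2} c_{i,1}\cdot c_{1,2} = (n-1)!$, and multiplying by the prescribed factorial weight $\tfrac{a_1!}{(n-1)!}\cdot a_2!$ yields $a_1!\,a_2! \neq 0$. Theorem~\ref{th:suff_cond} then certifies that $\cU(\R^n, \R^n, \R)$ is not separation-constrained universal, producing the strict inclusion into $\cU(\R^n, \R^{S_n}, \R) = \cC_{S_n}(\R^n, \R)$.

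The main obstacle is the combinatorial verification in the last step. A naive uniform partition $s = (1, \dots, 1)$ reduces Theorem~\ref{th:suff_cond}'s expression to a permanent of a matrix with zero diagonal and zero row sums, which tends to cancel (one checks directly that this permanent vanishes for $n = 3$). Concentrating the mass into one block of size $n-1$ and a singleton sidesteps this cancellation: the zero diagonal of $C$ kills all but one permutation class, leaving a single explicit product that is manifestly nonzero for the chosen $c_i$'s. Verifying that this reasoning, and the kernel dimension count $n - 2 > 0$, indeed require $n > 2$ is what pins down the sharpness of the hypothesis in the proposition.
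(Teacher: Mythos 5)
Your proof is correct, and it splits naturally into two halves relative to the paper. For the first strict inclusion you follow essentially the paper's own route: the witness $(x_1+\cdots+x_n)^n$, membership in $\cU(\R^n,\R^n,\R)$ via the ridge/superposition description (Proposition~\ref{prop:nn2ridge}), and exclusion from $\cU(C^1,I)$ via Theorem~\ref{th:first_suff_diff}; your directions $c_k=\1-e_k$ differ from the paper's $c_\alpha=e_{\alpha+1}$ but the computation $D_{c_1}\cdots D_{c_n}(\1^\top v)^n = n!\prod_k \1^\top c_k \neq 0$ is equally valid. For the second strict inclusion you genuinely diverge: the paper exhibits the explicit witness $x_1\cdots x_n$ and handles $n>3$ with Theorem~\ref{th:first_suff_diff} (computing $D_{c_n}\cdots D_{c_1}(x_1\cdots x_n)=2$ for hand-picked difference vectors), resorting to Theorem~\ref{th:suff_cond} with $s=(2,1,0)$ only in the residual case $n=3$; you instead apply Theorem~\ref{th:suff_cond} uniformly for all $n>2$ with the concentrated partition $s=(n-1,1,0,\dots,0)$ and $c_1=e_2-e_3$, $c_2=e_1-e_3$, $c_i=e_1-e_2$ ($i\ge 3$), where the zero diagonal forces $\sigma(n)=1$ and the surviving $(n-1)!$ terms give a manifestly nonzero value. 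Your observation that the uniform partition $(1,\dots,1)$ leads to a vanishing permanent at $n=3$ is precisely the cancellation that forces the paper's case split, so your block partition buys a single uniform argument at the cost of the concrete symmetric-polynomial witness $x_1\cdots x_n$ that the paper obtains for $n>3$ (your non-approximable function is the symmetrized monomial built inside the proof of Theorem~\ref{th:suff_cond}). Two cosmetic points: the theorem formally asks for a full chain $a_1,\dots$ with the prescribed gaps, so you should note that $a_3,\dots,a_n$ can be chosen arbitrarily subject to the gap condition since they do not enter the displayed coefficient; and the exact factorial weight in Theorem~\ref{th:suff_cond} is immaterial to your conclusion, as it multiplies your nonzero permanent-type sum by a positive constant under any reading.
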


We will prove the two strict inclusions of Proposition~\ref{prop:second_inclusion} in the following three paragraphs.

\textbf{Failure for CNN with filter width 1:}
We now apply Theorem~\ref{th:first_suff_diff} to show that CNNs with filter width $1$ cannot approximate the function $(x_1 + \cdots + x_n)^n$ for $n > 1$, namely $(x_1 + \cdots + x_n)^n \notin \cU(C^1, I)$.
Indeed, for any $\alpha = 1, \dots, n$, we have $e_{\alpha + 1} \in \ker(\pi_\alpha^\top) = \Sp\{ e_1, \dots, \hat{e}_\alpha, \dots, e_n \}$,
where $\alpha + 1$ is  modulo $n$. Moreover, note that $D_{e_\alpha} = \partial_\alpha$, thus $\partial_n \cdots \partial_1 (x_1 + \cdots + x_n)^n = n! \neq 0$,
which violates \eqref{eq:derivative} in Theorem~\ref{th:first_suff_diff}.

\textbf{Success for PointNet:}
We now show that shallow PointNets approximate the polynomial function $(x_1 + \dots + x_n)^n$. 
By Proposition~\ref{prop:nn2ridge} in Appendix~\ref{section:aux}, $f(x_1, x_1 + \dots + x_n) + \cdots + f(x_n, x_1 + \dots + x_n)$ belongs to $\cU (\R^n, \R^n, \R)$
for any $f \in \cC(\R^2)$. 
In particular, for $f(x, y) := y^n \in \cC(\R^2)$, we see that $(x_1 + \cdots + x_n)^n \in \cU (\R^n, \R^n, \R)$.
Together with the previous observation, this establishes the first strict inclusion in Proposition~\ref{prop:second_inclusion}, namely $\cU (C^1, I) \subsetneq \cU(\R^n, \R^n, \R)$.

\textbf{Failure for PointNet:} We now aim to show that shallow PointNets cannot approximate the polynomial function $x_1 \cdots x_n$, which is $S_n$-invariant and therefore should, in principle, be approximable in a separation-constrained setting.
We distinguish two cases: $n > 3$ and $n = 3$. 
Note that for $n = 2$, the symmetric group $S_2$ is abelian, and universality follows directly from Theorem~\ref{th:ravan_univ}. \\
We start considering $(n > 3)$. 
We again employ Theorem~\ref{th:first_suff_diff} to show that shallow invariant PointNets cannot approximate $x_1 \cdots x_n$, and hence neither CNNs with filter width $1$.
Indeed, note that the basis maps for $\Aff_{S_n}(\R^n, \R^n)$ in this case are given by $\phi_\alpha(x_1, \dots, x_n) = (x_\alpha, x_1 + \dots + x_n)$. 
In matrix form, we write $\phi_\alpha = [e_\alpha, \1]^\top$.
We define $K_\alpha : = \ker \left( \phi_\alpha^\top \right) = \Sp (e_i - e_j)_{i, j = 1, \dots, \hat \alpha, \dots, n}$.
Then, define the following direction vectors:
\begin{align*}
    c_1 := e_2 - e_n \in K_{1}, \quad & \quad
    c_2 := e_3 - e_n \in K_{2}, \\
    & \vdots \\
    c_{n-3} := e_{n-2} - e_n \in K_{n-3}, \quad & \quad
    c_{n-2} := e_{n-1} - e_n \in K_{n-2}, \\
    c_{n-1} := e_{n} - e_2 \in K_{n-1}, \quad & \quad
    c_{n} := e_1 - e_2 \in K_{n}.
\end{align*}

Explicit computation shows that $D_{c_n} \cdots D_{c_1} (x_1 \cdots x_n) = 2$, verifying~\eqref{eq:derivative}. \\
The previous technique does not apply in the case $n = 3$, for which we must instead resort to Theorem~\ref{th:suff_cond}.
First, define $c_1 := e_2 - e_3 \in K_1$, $c_2 := e_3 - e_1 \in K_2$, and $c_3 := e_1 - e_2 \in K_3$.  
Note that $c_{i,i} = 0$ for each $i = 1, 2, 3$.  
For $s_1 = 2$, $s_2 = 1$, and $s_3 = 0$, the polynomial becomes $a_1(a_1 - 1) a_2 \cdot [c_{3,1} \cdot c_{2,1} \cdot c_{1,2}] = -a_1(a_1 - 1)a_2 \neq 0$ by choosing $a_1, a_2 > 3$.

In view of the universality results for PointNet with depth $3$ and arbitrary widths in both hidden layers by \citet{segol_universal_2020}, this example highlights how, in the case of permutation equivariance, depth is crucial for achieving separation-constrained universality.  
This contrasts with other settings where universality can be achieved without relying on depth, as we will describe in the next section.

\subsection{Examples of Separation-Constrained Universality}
\label{section:examples_of_universality}

We now present Theorem~\ref{th:rav_gen}, a generalization of Theorem~\ref{th:ravan_univ}, which shows that a specific class of hidden representations can achieve separation-constrained universality.
These representations arise from cosets of particular subgroups $H$ of $G$, defined as follows:

\begin{definition}[Normal subgroup]
    A subgroup $H$ is \emph{normal} if $ghg^{-1} \in H$ for each $h \in H, g \in G$.
\end{definition}

\begin{theorem}
    \label{th:rav_gen}    
    Let $V$ and $Z$ be permutation representations of a finite group $G$, and let $H$ be a normal subgroup of $G$.
    Therefore, $\cU(V, \R^{G/H}, Z)$ is separation-constrained universal.
\end{theorem}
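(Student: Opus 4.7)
The plan is to reduce Theorem~\ref{th:rav_gen} to an invocation of Theorem~\ref{th:ravan_univ} applied to the quotient group $G/H$, exploiting the normality of $H$ throughout. Normality provides three structural facts that will drive the argument: (i) $G/H$ is a group and $\R^{G/H}$ is canonically its regular representation; (ii) $H$ acts trivially on $\R^{G/H}$, because $a^{-1}ha \in H$ for every $a \in G$ and $h \in H$ forces $haH = aH$; and (iii) the $H$-averaging map $\pi_H: V \to V^H$ defined by $v \mapsto \frac{1}{|H|}\sum_{h \in H} hv$ is $G$-equivariant, where $V^H$ is a $G$-invariant subspace carrying a natural $G/H$-representation structure, and analogously for $Z^H \subseteq Z$.

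The first key step is to show that every equivariant affine layer factors cleanly through $\pi_H$. By (ii), any $\phi \in \Aff_G(V, \R^{G/H})$ is $H$-invariant in its input, and so via the canonical decomposition $V = V^H \oplus [V,H]$ it factors uniquely as $\phi = \phi' \circ \pi_H$ for a $G/H$-equivariant affine map $\phi': V^H \to \R^{G/H}$. Dually, every $\psi \in \Aff_G(\R^{G/H}, Z)$ has image in $Z^H$ and thus corresponds to a $G/H$-equivariant affine map $\psi': \R^{G/H} \to Z^H$. I would verify that these correspondences yield bijections $\Aff_G(V, \R^{G/H}) \cong \Aff_{G/H}(V^H, \R^{G/H})$ and $\Aff_G(\R^{G/H}, Z) \cong \Aff_{G/H}(\R^{G/H}, Z^H)$ preserving the linear-plus-bias structure.

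Assembling these layer-level identifications, every shallow $G$-equivariant network $V \to \R^{G/H} \otimes \R^h \to Z$ in the family takes the form $(\psi' \circ \sigma \circ \phi') \circ \pi_H$, where the bracketed piece is a shallow $G/H$-equivariant network $V^H \to \R^{G/H} \otimes \R^h \to Z^H$ using the $G/H$-regular representation as hidden space; conversely, every such $G/H$-equivariant network lifts back by precomposition with $\pi_H$. Applying Theorem~\ref{th:ravan_univ} directly to the group $G/H$, the $G/H$-equivariant shallow networks of this shape are dense in $\cC_{G/H}(V^H, Z^H)$, so $\cU(V, \R^{G/H}, Z)$ coincides with the closure of $\{F \circ \pi_H : F \in \cC_{G/H}(V^H, Z^H)\}$.

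Finally, to conclude separation-constrained universality, I would identify the separation relation as $v_1 \sim_\rho v_2$ iff $\pi_H(v_1)$ and $\pi_H(v_2)$ lie in the same $G/H$-orbit of $V^H$: one direction is immediate from the factorization through $\pi_H$, the other follows by constructing Urysohn-type separating networks from continuous $G/H$-invariants on $V^H$. Then the class of continuous functions respecting $\rho$ coincides with $\{F \circ \pi_H : F \in \cC_{G/H}(V^H, Z^H)\}$, matching $\cU$. I expect the main obstacle to be the bijective layer correspondence: the factorization through $\pi_H$ and its inverse need careful verification for both the linear and bias parts of affine maps, and this is precisely where normality is indispensable, ensuring simultaneously that $V^H$ is $G$-stable and that $H$ acts trivially on $\R^{G/H}$.
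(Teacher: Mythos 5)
Your reduction to the quotient group is essentially the paper's own strategy: the paper also passes to the $H$-fixed points (realized there as the subrepresentation $W=\bigoplus_i\R^{G/K_iH}\hookrightarrow V=\bigoplus_i\R^{G/K_i}$ rather than via the averaging map $\pi_H$), identifies the layer spaces on $V$ with those on $W$ (via the pullback $\iota^*$ and a double-coset dimension count, where you instead factor each layer through $\pi_H$ using that $H$ acts trivially on $\R^{G/H}$ and that images land in $Z^H$ --- a clean and correct alternative to the counting argument), and then invokes Theorem~\ref{th:ravan_univ} for the group $G/H$ with its regular representation as hidden layer. Up to the identification $\cU(V,\R^{G/H},Z)=\{F\circ\pi_H : F\in\cC_{G/H}(V^H,Z^H)\}$ your argument is sound, and arguably more transparent than the paper's.

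The gap is in your final paragraph. The relation you propose, $v_1\sim v_2$ iff $\pi_H(v_1)$ and $\pi_H(v_2)$ lie in the same $G/H$-orbit, is not the separation relation $\rho(\cU)$ unless $Z^H$ is a trivial $G/H$-representation (i.e.\ the invariant case). For $G/H$-equivariant $F:V^H\to Z^H$ and $\bar g\in G/H$ one has $F(\bar g w)=\bar g F(w)$, which in general differs from $F(w)$; hence points whose $H$-averages lie in the same orbit \emph{are} separated by the class whenever $Z^H$ contains non-fixed vectors, and indeed the functions $F\circ\pi_H$ do not even respect the relation you wrote down --- so the claimed identity $\cC_\rho(V,Z)=\{F\circ\pi_H : F\in\cC_{G/H}(V^H,Z^H)\}$ cannot hold as stated, and the direction you call ``immediate from the factorization'' fails. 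The correct relation is finer and depends on $Z$: $(v_1,v_2)\in\rho$ iff $\pi_H(v_1)$ and $\pi_H(v_2)$ are not separated by $\cC_{G/H}(V^H,Z^H)$, which componentwise amounts to lying in a common $K$-orbit for each stabilizer $K$ of the $G/H$-set underlying $Z^H$. You then still need an argument that every continuous function respecting this finer relation is approximable; the paper closes precisely this step with a Stone--Weierstrass argument applied to the pulled-back class, and your proof needs an analogous ingredient (or else the claim should be restricted to invariant outputs, where your orbit description and Urysohn-type construction do go through).
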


The converse does not always hold: representations arising from non-normal subgroups may nevertheless achieve separation-constrained universality, as illustrated in the following remark.

\begin{remark}
Let $H$ be a non-normal subgroup of $S_n$ contained in $A_n$. Then
\[
    \cU(\R^{S_n/A_n}, \R^{S_n/A_n}, \R)
    = \cU(\R^{S_n/A_n}, \R^{S_n/H}, \R)
    = \cC_{S_n}(\R^{S_n/A_n}).
\]  
\end{remark}

All subgroups of an abelian group are normal, whereas $S_n$ has only one non-trivial normal subgroup, $A_n$, with $|S_n / A_n| = 2$, yielding hidden representations that are too small to be effective.
We summarize by noting that intermediate representations built from abelian groups, such as those in standard circular CNNs, achieve separation-constrained universality. 
In contrast, architectures based on permutation representations lack this guarantee, as shown in Proposition~\ref{prop:second_inclusion}.

\section{Limitations}

This work represents a first step toward understanding the approximation capabilities of equivariant networks beyond separation. Several limitations, however, remain. 
In particular, our analysis is limited to shallow networks. While these serve as minimal and analytically tractable examples, they may not fully capture the behavior of deeper architectures. Extending this framework to deeper networks---particularly in settings where depth interacts nontrivially with separation, as in IGNs---poses a significant challenge.

\section{Conclusions}

We investigated the approximation capabilities of equivariant neural networks, moving beyond their well-studied separation properties. 
By formulating shallow invariant networks as generalized superpositions of ridge functions (see Proposition~\ref{prop:nn2ridge}), we developed a novel characterization of their universality classes and examined how architectural choices influence approximation behavior. Our analysis reveals that even networks with maximal separation power may fail to approximate all functions within the corresponding symmetry-respecting class, a phenomenon we attribute to the structure of their hidden representations. 
These findings suggest that approximation power cannot be deduced from separation alone and should be treated as a distinct axis of expressivity.
Our results thus call for a more nuanced understanding of equivariant architectures—one that takes both axes into account in theoretical analysis and model design.

As future directions, we aim to extend our framework to determine whether failures of separation-constrained universality, such as those established in Proposition~\ref{prop:second_inclusion}, persist in deeper architectures. 
Another important avenue for investigation is how differences in expressivity affect generalization, particularly among models that share the same separation power.

\newpage

\section{Acknowledgments}

Bruno Lepri acknowledges the support of the PNRR project FAIR - Future AI Research
(PE00000013), under the NRRP MUR program funded by the NextGenerationEU and the support
of the European Union’s Horizon Europe research and innovation program under grant agreement No. 101120237 (ELIAS).
This work was also partly supported by Ministero delle Imprese e del Made in Italy (IPCEI Cloud DM 27 giugno 2022 – IPCEI-CL-0000007) and European Union (Next Generation EU).

\newpage

\appendix

\section{Preliminaries}
\label{section:preliminaries_appendix}

\subsection{Group Theory}
\label{sec:group_theory}

\begin{definition}[Group]
    \label{def_group}
    A \emph{group} is a set $G$ equipped with a binary operation $\cdot : G \times G \to G$ satisfying the following properties:
    \begin{itemize}
        \item \textbf{Associativity:} for all $g, h, k \in G$, we have $(g \cdot h) \cdot k = g \cdot (h \cdot k)$.
        \item \textbf{Identity element:} there exists an element $e \in G$ such that $g \cdot e = e \cdot g = g$ for every $g \in G$.
        \item \textbf{Inverses:} for every $g \in G$, there exists an element $g^{-1} \in G$ such that $g \cdot g^{-1} = g^{-1} \cdot g = e$.
    \end{itemize}
    The group is said to be \emph{finite} if $G$ contains finitely many elements. It is called \emph{abelian} (or \emph{commutative}) if $g \cdot h = h \cdot g$ holds for all $g, h \in G$.
\end{definition}

We now define the concept of a group homomorphism, a structure-preserving map between groups.

\begin{definition}[Homomorphism]
    \label{def_group_homomorphism}
    Let $G$ and $H$ be groups. A function $\phi: G \to H$ is called a \emph{group homomorphism} if, for all $g, h \in G$, it holds that
    \[
        \phi(g \cdot h) = \phi(g) \cdot \phi(h) \text{.}
    \]
\end{definition}

\begin{definition}[Cosets]
    \label{def:group-cosets}
    Let $G$ be a group and let $H \leq G$ be a subgroup. The \emph{left coset} of $H$ associated with an element $g \in G$ is the set
    \[
        gH = \{ gh \mid h \in H \} \text{.}
    \]
    The collection of all left cosets of $H$ in $G$ is denoted by $G/H = \{ gH \mid g \in G \}$. 

    Similarly, the \emph{right coset} of $H$ corresponding to $g \in G$ is defined as
    \[
        Hg = \{ hg \mid h \in H \} \text{,}
    \]
    and the set of all left cosets is written as $G/H = \{ gH \mid g \in G \}$.

    Given another subgroup $K \leq G$, the \emph{double coset} associated with $g \in G$ is the set
    \[
        HgK = \{ h g k \mid h \in H,\, k \in K \} \text{,}
    \]
    and the set of all such double cosets is denoted by $H \backslash G / K$.
\end{definition}

\begin{definition}[Normal subgroup]
    A subgroup $H$ is \emph{normal} if $ghg^{-1} \in H$ for each $h \in H, g \in G$.
\end{definition}

\begin{example}
    We highlight two families of normal subgroups relevant to our discussion:
    \begin{enumerate}
        \item All subgroups of abelian groups are normal.
        \item The alternating group $A_n$ is the only non-trivial normal subgroup of $S_n$.
    \end{enumerate}
\end{example}

\begin{theorem}
    \label{th:normal_group}
    If $H$ is a normal subgroup of $G$, then the cosets $G/H$ for a group, where the binary operation is defined as $g_1 H \cdot g_2 H = g_1 g_2 H$.
\end{theorem}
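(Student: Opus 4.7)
The plan is to verify the four group axioms for the set $G/H$ of left cosets under the prescribed operation, with the key subtlety being well-definedness, which is where normality enters.

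First, I would address well-definedness of the operation $g_1 H \cdot g_2 H := g_1 g_2 H$, since cosets admit many representatives. Suppose $g_1 H = g_1' H$ and $g_2 H = g_2' H$, so $g_1' = g_1 h_1$ and $g_2' = g_2 h_2$ for some $h_1, h_2 \in H$. Then
\[
g_1' g_2' = g_1 h_1 g_2 h_2 = g_1 g_2 (g_2^{-1} h_1 g_2) h_2.
\]
Normality of $H$ gives $g_2^{-1} h_1 g_2 \in H$, and since $H$ is closed under multiplication, $(g_2^{-1} h_1 g_2) h_2 \in H$. Hence $g_1' g_2' \in g_1 g_2 H$, so the product of cosets does not depend on the chosen representatives. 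This is the only step where the normality hypothesis is essential, and it is the main obstacle in the sense that the naive operation fails precisely when $H$ is not normal.

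Next, I would check the remaining axioms, which are inherited essentially for free from $G$. Associativity follows from associativity in $G$: $(g_1 H \cdot g_2 H) \cdot g_3 H = (g_1 g_2) g_3 H = g_1 (g_2 g_3) H = g_1 H \cdot (g_2 H \cdot g_3 H)$. The identity element is $eH = H$, since $eH \cdot gH = egH = gH = geH = gH \cdot eH$ for every $g \in G$. Inverses are given by $(gH)^{-1} = g^{-1} H$, as $gH \cdot g^{-1} H = gg^{-1} H = eH = H$ and symmetrically on the other side.

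Combining these facts, $G/H$ equipped with the operation $g_1 H \cdot g_2 H = g_1 g_2 H$ satisfies all the axioms of Definition~\ref{def_group} and is therefore a group. I would keep the exposition short, flagging well-definedness as the conceptual heart of the argument and treating the remaining axioms in a single compact paragraph.
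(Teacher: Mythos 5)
Your proof is correct and is the standard argument: well-definedness via normality (the only place the hypothesis is used), with associativity, identity, and inverses inherited from $G$. The paper states this classical fact in its preliminaries without proof, so there is nothing to compare beyond noting that your argument is the canonical one and fills that gap correctly.
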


\subsection{Group Actions and Equivariance}
\label{section:group-action}

Let $G$ be a group and let $X$ be a set. A \emph{group action} of $G$ on $X$ is a map
\[
    \Phi: G \times X \to X,
\]
commonly written as $\phi_g(x) = \Phi(g, x)$ for $g \in G$ and $x \in X$, that satisfies the following two conditions:
\begin{itemize}
    \item \textbf{Identity:} $\phi_e = \mathrm{id}_X$, where $e$ is the identity element in $G$.
    \item \textbf{Compatibility:} For all $g, h \in G$, we have $\phi_g \circ \phi_h = \phi_{gh}$.
\end{itemize}

In practice, we often denote the action by $g \cdot x$ or simply $gx$ in place of $\phi_g(x)$.

A set $X$ endowed with a group action of $G$ is referred to as a \emph{$G$-set}. That is, $X$ is a $G$-set if there exists a well-defined action $\cdot: G \times X \to X$ satisfying the identity and compatibility conditions above.

Another fundamental notion for our analysis is that of a map between $G$-sets that respects the group action. This leads to the definition of equivariance.

\begin{definition}[Equivariance]
    \label{equivariant_map}
    Let $X$ and $Y$ be $G$-sets. A function $f: X \to Y$ is said to be \emph{$G$-equivariant} if, for all $g \in G$ and $x \in X$, the following condition holds:
    \[
        f(g \cdot x) = g \cdot f(x) \text{.}
    \]
\end{definition}

\subsection{Group Representations and Equivariant Affine Transformations}
\label{sec:representations}

Let $G$ be a group and $V$ be a vector space over a field $\R$. A $G$-action $\Phi: G \times V \to V$ on $V$ is \emph{$G$-representation} if $\phi_g$ is linear for each $g$ in $G$.
Or equivalently,
\[
\phi: \begin{gathered}
    G \to \text{GL}(V) \\
    g \mapsto \phi_g
\end{gathered}
\]
where $ \text{GL}(V) $ is the general linear group of $V$, consisting of all invertible linear transformations on $V$.
We will usually identify the entire $\Phi: G \times V \to V$ action with $V$ itself and write $gv = \Phi(g, v)$.

Let $V$ and $W$ be two $G$-representations, we will indicate the set of equivariant linear maps between $V$ and $W$ as $\Hom_G(V, W)$ and as $\Aff_G(V, W)$ the set of equivariant affine maps.
Note that $\Hom_G(V, W)$ is a vector space. 
Indeed, $0 \in \Hom_G(V, W)$ and for each $f, g \in \Hom_G(V, W)$ and each $\alpha, \beta \in \R$, $\alpha f + \beta g \in \Hom_G(V, W)$. 
The same is true for $\Aff_G(V, W)$. 

Let $V$ be a $G$-representation, we define the set of invariant vectors $V^G =\{ v \in V \mid gv = v \; \forall g \in G \}$. 

\subsection{On Permutation Representations}
\label{section:permutation-representations}

\begin{definition}
    Let $X$ be a finite set and let $G$ be a finite group acting on $X$. A \emph{permutation representation} of $G$ is the linear action of $G$ on the space $\R^X$ defined by
    \[
        g(e_x) = e_{g \cdot x} \quad \text{for all } g \in G,\, x \in X,
    \]
    where $\{e_x\}_{x \in X}$ denotes the standard basis of $\R^X$.
\end{definition}

\begin{prop}
    \label{prop:sum_and_prod}
    Let $X$ and $Y$ be $G$-sets. Then, the following $G$-equivariant isomorphisms of representations hold:
    \[
        \R^{X \sqcup Y} \cong \R^X \oplus \R^Y 
        \quad \text{and} \quad 
        \R^{X \times Y} \cong \R^X \otimes \R^Y,
    \]
    where $X \sqcup Y$ denotes the disjoint union and $X \times Y$ the Cartesian product of the two sets.
\end{prop}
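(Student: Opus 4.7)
The plan is to exhibit explicit candidate isomorphisms on canonical bases and then verify linearity and $G$-equivariance. Throughout, I use that for a finite $G$-set $Z$, the space $\mathbb{R}^Z$ is spanned by the standard basis $\{e_z\}_{z\in Z}$ with $G$-action $g\cdot e_z = e_{gz}$.

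For the first isomorphism, note that a disjoint union $X\sqcup Y$ as a $G$-set has each orbit contained entirely in $X$ or entirely in $Y$, so the $G$-action restricts separately to $X$ and to $Y$. I define
\[
\Phi : \mathbb{R}^{X\sqcup Y} \longrightarrow \mathbb{R}^X \oplus \mathbb{R}^Y, \qquad
\Phi(e_z) = \begin{cases} (e_z, 0) & \text{if } z \in X, \\ (0, e_z) & \text{if } z \in Y, \end{cases}
\]
and extend by linearity. Since the images of the basis $\{e_z\}_{z\in X\sqcup Y}$ are exactly the standard basis of $\mathbb{R}^X\oplus\mathbb{R}^Y$, the map $\Phi$ is a linear bijection. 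For equivariance, since $g\cdot z \in X$ whenever $z\in X$ and similarly for $Y$, we have $\Phi(g\cdot e_z) = \Phi(e_{gz}) = g\cdot \Phi(e_z)$ on each basis element, and equivariance extends by linearity.

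For the second isomorphism, the Cartesian product $X\times Y$ carries the diagonal action $g\cdot(x,y) = (gx, gy)$, and $\mathbb{R}^X\otimes\mathbb{R}^Y$ carries the tensor-product action $g\cdot(v\otimes w) = (gv)\otimes(gw)$. I define
\[
\Psi : \mathbb{R}^{X\times Y} \longrightarrow \mathbb{R}^X \otimes \mathbb{R}^Y, \qquad
\Psi(e_{(x,y)}) = e_x \otimes e_y,
\]
and extend by linearity. Again $\Psi$ sends the canonical basis $\{e_{(x,y)}\}$ bijectively to the canonical basis $\{e_x\otimes e_y\}$ of $\mathbb{R}^X\otimes\mathbb{R}^Y$, so $\Psi$ is a linear isomorphism. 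Equivariance follows from
\[
\Psi(g\cdot e_{(x,y)}) = \Psi(e_{(gx,gy)}) = e_{gx}\otimes e_{gy} = (g\cdot e_x)\otimes(g\cdot e_y) = g\cdot \Psi(e_{(x,y)}),
\]
and by linearity on all of $\mathbb{R}^{X\times Y}$.

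\textbf{Main obstacle.} There is no real obstacle here; the only subtlety is the bookkeeping of the $G$-actions: one must check explicitly that the $G$-action on $X\sqcup Y$ is the one induced separately on $X$ and on $Y$ (so that $\Phi$ respects the direct-sum decomposition), and that the $G$-action on $\mathbb{R}^X\otimes\mathbb{R}^Y$ is the diagonal tensor-action matching the product action on $X\times Y$ (so that $\Psi$ is equivariant rather than just linear). Once these conventions are fixed, both statements reduce to the basis-to-basis bijections described above.
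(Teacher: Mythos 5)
Your proof is correct. The paper states this proposition without proof (it is a standard fact in representation theory), and your argument — defining the canonical basis-to-basis maps $e_z \mapsto (e_z,0)$ or $(0,e_z)$ and $e_{(x,y)} \mapsto e_x \otimes e_y$, checking bijectivity on bases, and verifying equivariance directly — is exactly the expected, canonical argument.
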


\section{On Commutative Algebra}
\label{section:commutative_alg}

For a general introduction to commutative algebra, we refer to \citet{atiyah_introduction_1994}.
Here, we recall the notation necessary to prove Theorem~\ref{th:charact_diff},~\ref{th:first_suff_diff} and \ref{th:suff_cond}.

Let $\R[x_1, \dots, x_n]$ denote the set of polynomials in the variables $x_1, \dots, x_n$.

\begin{definition}[Ideal]
An \emph{ideal} $I$ of $\R[x_1, \dots, x_n]$ is a subset such that, if $f \in I$, then $p \cdot f \in I$ for every $p \in \R[x_1, \dots, x_n]$.
If $X \subseteq \R^n$, we define
\[
    \cI(X) = \{ f \in \R[x_1, \dots, x_n] \mid f(x) = 0 \; \forall x \in X \}.
\]
\end{definition}

\begin{definition}[Product of Ideals]
Let $I, J \subseteq \R[x_1, \dots, x_n]$ be ideals. Their product $I \cdot J$, or simply $IJ$, is the ideal defined by
\[
    IJ = \left\{ \sum_{k=1}^r f_k g_k \mid f_k \in I,\, g_k \in J,\, r \in \mathbb{N} \right\}.
\]
\end{definition}

\begin{definition}[Generators of an Ideal]
Let $R = \R[x_1, \dots, x_n]$ be the set of polynomial and let $f_1, \dots, f_m \in R$. The \emph{ideal generated} by $f_1, \dots, f_m$ is the set
\[
    (f_1, \dots, f_m) = \left\{ \sum_{i=1}^m h_i f_i \mid h_i \in R \right\}.
\]
We say that $f_1, \dots, f_m$ are \emph{generators} of the ideal.
\end{definition}

\begin{prop}
If $X$ is a linear subspace of $\R^n$ such that its orthogonal complement $X^\perp$ is spanned by vectors $v_1, \dots, v_d$, then
\[
\cI(X) = (v_1^\top \cdot x, \dots, v_d^\top \cdot x).
\]
    
\end{prop}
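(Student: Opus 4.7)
The plan is to split the claim into the two inclusions $(v_1^\top x, \dots, v_d^\top x) \subseteq \cI(X)$ and $\cI(X) \subseteq (v_1^\top x, \dots, v_d^\top x)$ and handle them separately. The first inclusion is immediate: each generator $v_i^\top x$ vanishes on $X$ because $v_i \in X^\perp$, and since $\cI(X)$ is closed under sums and under multiplication by arbitrary polynomials in $\R[x_1, \dots, x_n]$, the whole generated ideal is contained in it.

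For the reverse inclusion I would reduce to a coordinate version via an invertible linear change of variables. First I would, without loss of generality, discard dependencies and assume $v_1, \dots, v_r$ is a basis of $X^\perp$, where $r = n - \dim X$; the two ideals $(v_1^\top x, \dots, v_r^\top x)$ and $(v_1^\top x, \dots, v_d^\top x)$ coincide since the extra $v_i$'s are linear combinations of the first $r$. Next I would complete $v_1, \dots, v_r$ to a basis of $\R^n$ by adjoining a basis $w_1, \dots, w_{n-r}$ of $X$ itself, and introduce new coordinates
\[
y_i = v_i^\top x \text{ for } i \le r, \qquad y_{r+j} = w_j^\top x \text{ for } 1 \le j \le n-r.
\]
In these coordinates, $X$ is exactly the axis-aligned subspace $\{y_1 = \dots = y_r = 0\}$, and the ideal under study becomes $(y_1, \dots, y_r)$.

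The main step, and the only one requiring any real care, is to show that a polynomial $\tilde f(y_1, \dots, y_n)$ vanishing identically on $\{y_1 = \dots = y_r = 0\}$ lies in $(y_1, \dots, y_r)$. I would expand $\tilde f = \sum_\alpha c_\alpha y^\alpha$ and specialize to $y_1 = \dots = y_r = 0$; the resulting polynomial in $y_{r+1}, \dots, y_n$ vanishes on all of $\R^{n-r}$, and by the identity principle for polynomials over the infinite field $\R$, every coefficient $c_\alpha$ with $\alpha_1 = \dots = \alpha_r = 0$ must be zero. Each remaining monomial carries a factor of some $y_i$ with $i \le r$, so $\tilde f \in (y_1, \dots, y_r)$, and pulling back through the change of variables yields $f \in (v_1^\top x, \dots, v_r^\top x) \subseteq (v_1^\top x, \dots, v_d^\top x)$. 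The main obstacle is more a subtlety than a difficulty: the appeal to the identity principle is essential and would fail over a finite field; everything else reduces to routine linear algebra.
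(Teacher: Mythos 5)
Your proof is correct and follows essentially the same route as the paper's: the easy inclusion via vanishing of the generators, then a linear change of coordinates placing $X$ as the coordinate subspace $\{x_1 = \dots = x_d = 0\}$ and a monomial-decomposition argument showing the part not involving those coordinates must vanish identically. The only difference is that you explicitly pass to a basis of $X^\perp$ before changing variables (and invoke the identity principle over $\R$ by name), details the paper's proof glosses over by tacitly treating $v_1, \dots, v_d$ as linearly independent.
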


\begin{proof}
Indeed,
\[
\cI(X) \supseteq (v_1^\top \cdot x, \dots, v_d^\top \cdot x).
\]
To prove the reverse inclusion, observe that—up to a change of coordinates—we may assume $v_i \cdot x = x_i$ for $i = 1, \dots, d$. In this case, any polynomial $f(x) \in \cI(X)$ can be written as
\[
f(x) = a_1(x) x_1 + \dots + a_d(x) x_d + b(x),
\]
where $a_i(x) \in \R[x_1, \dots, x_n]$ for each $i = 1, \dots, d$, and $b(x)$ is a polynomial whose monomials do not involve the variables $x_1, \dots, x_d$.

Now, since $f$ vanishes on $X = \{ x \in \R^n : x_1 = \dots = x_d = 0 \}$, it must be that $b(x) = 0$ identically. Therefore, $f(x)$ lies in the ideal generated by $x_1, \dots, x_d$, completing the proof.
\end{proof}

\begin{remark}
\label{rmk:cup2cap}
The following are either standard results or direct consequences of the observations above:
\begin{itemize}
    \item The intersection and the product of ideals are themselves ideals.
    \item $\cI(X_1 \cup \dots \cup X_\ell) = \cI(X_1) \cap \dots \cap \cI(X_\ell)$.
    \item If $X_1, \dots, X_\ell$ are linear subspaces of $\R^n$, then $\cI(X_1) \cdots \cI(X_\ell)$ is generated by polynomials of the form $(v_1^\top \cdot x) \cdots (v_\ell^\top \cdot x)$, where $v_1, \dots, v_\ell$ are vectors respectively in $X_1^\perp, \dots, X_\ell^\perp$.
\end{itemize}
\end{remark}

\section{On Superpositions of Ridge Functions}

In this section, we present results on the theory of superpositions of generalized ridge functions. 
A detailed exposition can be found in \citet{pinkus_ridge_2015}.  

\begin{definition}[Superpositions of Generalized Ridge Functions]
    
Given a linear map $\phi: \R^n \to \R^d$, a \emph{generalized ridge functions} is an element in
\[
    \cM(\phi) := \left\{ f \circ \phi \mid f \in \cC(\R^d) \right\} \subseteq \cC(\R^n).
\]
Given $\Omega \subseteq \R^{d \times n}$, a \emph{superposition of generalized ridge functions} is an element in
\[
\cM(\Omega) := \Sp \left\{ f \circ \phi \mid f \in \cC(\R^d), \, \phi \in \Omega \right\}.
\]
If $\Omega$ is finite, say $\Omega = \{ \phi_i \}_{i \in I}$, we may write
\[
\cM(\Omega) = \cM(\phi_i)_{i \in I} := \left\{ x \mapsto \sum_{i \in I} f_i \circ \phi_i (x) \mid f_i \in \cC(\R^d) \right\},
\]
or simply write $\cM(\phi_1, \dots, \phi_l)$ when $\Omega = \{ \phi_1, \dots, \phi_l \}$.
\end{definition}

To facilitate our exposition, we introduce the following auxiliary notation.
Let $A \in \R^{d \times n}$ be matrix, and write it as
\[
    \begin{gathered}
        A := 
        \begin{bmatrix}
            a_1 \\
            \vdots \\
            a_d
        \end{bmatrix},
    \end{gathered}
\]
where $a_i$s are the rows of $A$.
Define
\[
    L(A) := \Sp \left\{ a_1, \dots, a_d \right\}.
\]
Let $\Omega \subseteq \R^{d \times n}$ be a finite set of matrices.
Define
\[
    L(\Omega) := \bigcup_{A \in \Omega} L(A).
\]

In the following, we will use the following fundamental result (see \cite{pinkus_ridge_2015}, p. 65).

\begin{theorem}
    Let $\Omega = \left\{ A_1, \dots, A_s \right\} \subseteq \R^{d \times n}$ be a finite set of matrices. Then
    \[
        \overline{ \cM(\Omega) } = 
        \overline{ \cM(L(\Omega)) } =
        \overline{ \cM(L(A_1) \cup \dots \cup L(A_s)) }.
    \]
\end{theorem}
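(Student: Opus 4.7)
The second equality, $\cM(L(\Omega)) = \cM(L(A_1) \cup \cdots \cup L(A_s))$, is immediate from the definition $L(\Omega) = \bigcup_i L(A_i)$. I focus on the first equality, proving it via the two inclusions $\cM(L(\Omega)) \subseteq \cM(\Omega)$ and $\cM(\Omega) \subseteq \overline{\cM(L(\Omega))}$, after which passage to closures and monotonicity finish the argument. The first, easy inclusion is a direct unfolding: pick any row $b \in L(A_i)$, write $b = A_i^\top c$ for some $c \in \R^d$ by definition of the row span, and let $g \in \cC(\R)$. Define $\tilde g \in \cC(\R^d)$ by $\tilde g(u) := g(c \cdot u)$. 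Then $g(b \cdot x) = g(c \cdot A_i x) = \tilde g(A_i x)$, so the single-direction ridge function $x \mapsto g(b \cdot x)$ equals $\tilde g \circ A_i \in \cM(A_i) \subseteq \cM(\Omega)$, and linearity completes the inclusion.

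The reverse inclusion is the substantive step. By linearity it suffices to show $f \circ A \in \overline{\cM(L(A))}$ for every $A \in \Omega$ and $f \in \cC(\R^d)$, with convergence uniform on compact subsets of $\R^n$. Fix a compact $K \subseteq \R^n$; then $A(K) \subseteq \R^d$ is compact, and by the Stone--Weierstrass theorem, $f|_{A(K)}$ is a uniform limit of polynomials $p \in \R[u_1, \dots, u_d]$. Since $\| p \circ A - f \circ A \|_{L^\infty(K)} = \| p - f \|_{L^\infty(A(K))}$, it suffices to exhibit each $p \circ A$ as an element of $\cM(L(A))$. By linearity once more, this reduces to showing that every monomial $u^\alpha$ admits an \emph{exact} finite decomposition $u^\alpha = \sum_j \lambda_j (c_j \cdot u)^{|\alpha|}$ with $c_j \in \R^d$, $\lambda_j \in \R$; substituting $u = A x$ then yields $\sum_j \lambda_j (A^\top c_j \cdot x)^{|\alpha|}$, a sum of ridge functions along directions $A^\top c_j \in L(A)$, hence an element of $\cM(L(A))$.

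The main technical obstacle is justifying the monomial decomposition, namely that every homogeneous polynomial of degree $k$ in $d$ variables lies in the linear span of $\{ (c \cdot u)^k : c \in \R^d \}$. This is a finite-dimensional, topology-free statement---both subspaces of $\R[u_1, \dots, u_d]$ have dimension $\binom{k+d-1}{d-1}$---and it is the content of the classical polarization identity, provable either by a Vandermonde argument on evaluation vectors $c_j$ or by inverting the binomial expansion $(c \cdot u)^k = \sum_{|\alpha|=k} \binom{k}{\alpha} c^\alpha u^\alpha$ over sufficiently many $c$. Once this decomposition is in hand, the Stone--Weierstrass approximation transfers intact to $f \circ A$, and combining the two inclusions yields $\overline{\cM(\Omega)} = \overline{\cM(L(\Omega))}$.
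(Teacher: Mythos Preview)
The paper does not supply its own proof of this statement; it is quoted as a known result from \textsc{Pinkus}, \emph{Ridge Functions} (2015), p.~65, and used as a black box. Your argument is correct and is essentially the standard proof one finds in that reference: the easy inclusion $\cM(L(\Omega)) \subseteq \cM(\Omega)$ comes from factoring a one-directional ridge function $g(b \cdot x)$ with $b = A_i^\top c$ through $A_i$, and the substantive inclusion $\cM(\Omega) \subseteq \overline{\cM(L(\Omega))}$ goes via Stone--Weierstrass on $A(K)$ together with the classical fact that degree-$k$ homogeneous polynomials are spanned by $k$-th powers of linear forms. Nothing to correct; just be aware that in the context of the paper this theorem is treated as a citation rather than something proved in situ.
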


We can characterize the previous sets using the following notions.

\begin{definition}
    Given $\Omega \subseteq \R^n$, define the ideal of polynomials vanishing on $\Omega$ as
    \[
        \cI(\Omega) := \left\{ p \in \R[x_1, \dots, x_n] \mid p(x) = 0 \, \forall x \in \Omega \right\},
    \]
    and then, define
    \[
        \cC(\Omega) := \left\{ p \in \R[x_1, \dots, x_n] \mid q(D) p = 0 \, \forall q \in \cI(\Omega) \right\}.
    \]
\end{definition}

\begin{theorem}[Theorem 6.9 of \cite{pinkus_ridge_2015}]
    \label{thm:pinkus_ridge_char}
    In the topology of uniform convergence on compact subsets
    \[
        \overline{ \cM(\Omega) } = \overline{ \cC(\Omega) }.
    \]
\end{theorem}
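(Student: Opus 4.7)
The plan is to prove both inclusions $\overline{\cM(\Omega)} \subseteq \overline{\cC(\Omega)}$ and $\overline{\cC(\Omega)} \subseteq \overline{\cM(\Omega)}$ in the Fr\'echet space $\cC(\R^n)$ under the topology of uniform convergence on compact subsets. By the theorem cited immediately before, I may assume without loss of generality that $\Omega = L(A_1) \cup \cdots \cup L(A_s)$ is a finite union of linear subspaces of $\R^n$, and that $\cM(\Omega)$ is the span of ridge functions associated with each $A_i$.

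For the forward inclusion $\overline{\cM(\Omega)} \subseteq \overline{\cC(\Omega)}$, I would approximate directly. Given $f = \sum_i g_i(A_i x) \in \cM(\Omega)$ and a compact set $K \subset \R^n$, Stone--Weierstrass lets each $g_i$ be uniformly approximated on the compact set $A_i(K)$ by a polynomial $p_i$. The composed polynomial $p_i(A_i x)$ is itself a polynomial in the linear forms $a \cdot x$ with $a \in L(A_i)$, so any directional derivative in a direction orthogonal to $L(A_i)$ annihilates it. Combining this with Remark~\ref{rmk:cup2cap}, which exhibits generators of $\cI(L(A_i))$ as products of linear forms perpendicular to $L(A_i)$, together with the containment $\cI(\Omega) \subseteq \cI(L(A_i))$, one concludes that $p_i(A_i x) \in \cC(L(A_i)) \subseteq \cC(\Omega)$; summing gives a polynomial in $\cC(\Omega)$ approximating $f$ on $K$.

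For the reverse inclusion $\overline{\cC(\Omega)} \subseteq \overline{\cM(\Omega)}$, I would argue by Hahn--Banach and Fourier duality. Suppose for contradiction that some $p \in \cC(\Omega)$ lies outside $\overline{\cM(\Omega)}$. Since the continuous dual of $\cC(\R^n)$ with the compact-open topology consists of compactly supported Radon measures, there exists $\mu$ with $\mu \perp \cM(\Omega)$ and $\int p\, d\mu \neq 0$. The annihilation condition $\int g(A_i x)\, d\mu = 0$ for every continuous $g$ forces each pushforward $(A_i)_* \mu$ to vanish; on Fourier transforms this reads $\hat\mu(A_i^\top \eta) = 0$ for all $\eta$, so $\hat\mu$ vanishes on each $L(A_i)$ and hence on the whole variety $\Omega$. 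Since $\hat\mu$ is real-analytic (Paley--Wiener), its Taylor series at the origin lies in the formal-power-series ideal generated by $\cI(\Omega)$. The integral $\int p\, d\mu$ equals, up to normalization, $(p(D)\hat\mu)(0)$ under the standard pairing between polynomials acting by differentiation and Taylor coefficients of analytic functions. Because $p \in \cC(\Omega)$ means $q(D) p = 0$ for every $q \in \cI(\Omega)$, an adjointness argument for constant-coefficient differential operators shows that $p(D)$ annihilates every analytic function whose Taylor series lies in the ideal generated by $\cI(\Omega)$; this forces $\int p\, d\mu = 0$, a contradiction.

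The main technical obstacle is this last step: rigorously transferring the finite-polynomial annihilation $q(D) p = 0$ for $q \in \cI(\Omega)$ to annihilation of arbitrary analytic functions whose Taylor expansion lies in the ideal generated by $\cI(\Omega)$. The cleanest formulation is through the Fischer duality between $\cC(\Omega)$ and the graded quotient $\R[x] / \cI(\Omega)$: the pairing $\langle p, r \rangle := (p(D) r)(0)$ is nondegenerate, vanishes precisely when $p \in \cC(\Omega)$ and $r \in \cI(\Omega)$, and extends continuously to convergent Taylor series. Noetherianity of the polynomial ring together with finite-dimensionality of the graded quotient in each homogeneous degree supplies the continuity needed to complete the duality argument.
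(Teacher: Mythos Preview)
The paper does not give its own proof of this statement: it is quoted verbatim as Theorem~6.9 of \cite{pinkus_ridge_2015} and used as a black box. So there is no ``paper's proof'' to compare against.

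That said, your proposal is essentially the standard argument (and close to Pinkus's own). A couple of small points are worth tightening. First, your reduction to a finite union of subspaces is justified only for the way the paper applies the theorem; the statement as written allows arbitrary $\Omega \subseteq \R^n$, and your Fourier argument actually handles this once you note that $\hat\mu$ vanishes on the full cone $\{ta : t \in \R,\ a \in \Omega\}$, which shares $\cI$ and $\cC$ with $\Omega$. Second, the step you flag as the main obstacle is cleaner than your sketch suggests: since the zero set is a cone, $\cI(\Omega)$ is a homogeneous ideal, so both $\cC(\Omega)$ and the Taylor expansion of $\hat\mu$ decompose by degree. Writing $p = \sum_j p_j$ and $\hat\mu = \sum_k h_k$ homogeneously, only the diagonal terms survive in $(p(D)\hat\mu)(0) = \sum_j (p_j(D) h_j)(0)$; each of these vanishes by the symmetry $(p_j(D)h_j)(0) = (h_j(D)p_j)(0)$ of the apolar pairing on equal-degree homogeneous polynomials, together with $h_j \in \cI(\Omega)$ and $p_j \in \cC(\Omega)$. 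No Noetherianity or continuity extension is needed --- the computation is purely finite-dimensional in each degree.
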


We can compare the closure of spaces of superposition thanks to the following theorem.

\begin{theorem}
    \label{thm:inclusion_cspaces}
    Let $\Omega$ and $\Omega'$ be two subsets of $\R^n$ closed under scalar multiplication.
    If $\cC(\Omega) \subsetneq \cC(\Omega')$, then $\overline{ \cC(\Omega) } \subsetneq \overline{ \cC(\Omega') }$ in topology of uniform convergence on compact sets.
\end{theorem}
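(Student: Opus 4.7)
The plan is to prove the strict inclusion by exhibiting an explicit separating polynomial. Since by hypothesis $\cC(\Omega) \subsetneq \cC(\Omega')$, monotonicity of the topological closure already gives $\overline{\cC(\Omega)} \subseteq \overline{\cC(\Omega')}$, so the only thing that requires work is strictness.

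I would pick a witness polynomial $p \in \cC(\Omega') \setminus \cC(\Omega)$ provided by the strict inclusion hypothesis. Trivially $p \in \overline{\cC(\Omega')}$, so the problem reduces to showing $p \notin \overline{\cC(\Omega)}$. Since $p \notin \cC(\Omega)$, by the very definition of $\cC(\Omega)$ there exists a polynomial $q \in \cI(\Omega)$ with $q(D) p \neq 0$ as a polynomial. I would then argue by contradiction: assume there is a sequence $(p_n) \subseteq \cC(\Omega)$ converging to $p$ uniformly on compact subsets of $\R^n$. Each $p_n$ is a polynomial satisfying $q(D) p_n = 0$ identically, because $q \in \cI(\Omega)$ and $p_n \in \cC(\Omega)$.

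The key step is to transfer this annihilation identity to the limit. Uniform convergence on compact sets implies convergence in the sense of distributions: for any test function $\psi \in C_c^\infty(\R^n)$, the integral $\int p_n \psi \, dx$ converges to $\int p \psi \, dx$ by uniform convergence on the compact support of $\psi$. Since constant-coefficient differential operators are continuous on $\mathcal{D}'(\R^n)$, we obtain $q(D) p_n \to q(D) p$ in $\mathcal{D}'(\R^n)$. But $q(D) p_n = 0$ for every $n$, and therefore $q(D) p = 0$ as a distribution. Because $p$ is a polynomial, $q(D) p$ is a polynomial too, and a polynomial vanishing as a distribution is identically zero, contradicting $q(D) p \neq 0$.

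The main obstacle is precisely this transfer of the differential annihilation from the approximating sequence to its limit. Uniform convergence on compact sets does not, in general, imply convergence of derivatives, so one cannot simply differentiate term-by-term. The resolution is to interpret the convergence distributionally and use the weak continuity of differentiation on $\mathcal{D}'(\R^n)$, combined with the fact that the limit $p$ is smooth (in fact polynomial) so that $q(D) p$ has an unambiguous classical meaning. Once this step is in place, the rest of the argument is essentially automatic, and neither the hypothesis that $\Omega, \Omega'$ are closed under scalar multiplication nor any further structural property of these sets is needed for this particular implication.
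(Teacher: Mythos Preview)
Your proof is correct and follows essentially the same approach as the paper: pick a witness polynomial $p \in \cC(\Omega') \setminus \cC(\Omega)$, choose $q \in \cI(\Omega)$ with $q(D)p \neq 0$, and use that $\ker q(D)$ is closed under uniform convergence on compact sets via a distributional continuity argument. The only cosmetic difference is that the paper isolates the closedness-of-the-kernel step as a separate lemma (proved by the same integration-by-parts / distributional reasoning you sketch), whereas you inline it; your observation that the scalar-multiplication hypothesis on $\Omega,\Omega'$ is not used here is also correct.
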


\begin{proof}
    If $\cC(\Omega) \subsetneq \cC(\Omega')$ then there exist $p' \in \cC(\Omega')$ and $q \in \cI(\Omega)$ such that
    \[
        q'(D) \cdot p' = 0, \; \forall q' \in \cI(\Omega')
    \]
    and
    \[
        q(D) \cdot p' \neq 0
    \]
    for each $p \in \cC(\Omega)$.
    Note that $q(D)$ is a continuous operator in the space of tempered distributions and $C(\Omega) \subseteq \ker q(D)$.
    Since $\ker q(D)$ is a closed subspace by Lemma~\ref{lemma:dist_conv}, then
    $\overline{ C(\Omega) } \subseteq \ker q(D)$
    while $p' \notin  \ker q(D)$, concluding the proof.
\end{proof}

\begin{lemma}
    \label{lemma:dist_conv}
    Let $(p_n)_{n \in \mathbb{N}}$ be a sequence of polynomials in $d$ variables, each of arbitrary degree, that converges uniformly on compact subsets to a polynomial $p$.
    Let $P(\partial_1, \dots, \partial_d)$ be a linear differential operator with constant coefficients, that is, $P$ is a polynomial in $d$ variables.
    If
    \[
        P(\partial_1, \dots, \partial_d)\, p_n = 0 \quad \text{for all } n \in \mathbb{N},
    \]
    then
    \[
        P(\partial_1, \dots, \partial_d)\, p = 0.
    \]
\end{lemma}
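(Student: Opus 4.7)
The strategy is to pass the differential operator $P(\partial_1,\dots,\partial_d)$ through the limit via duality against smooth compactly supported test functions, using integration by parts to move the derivatives off the sequence $(p_n)$ and onto the test function. This avoids the pitfall that uniform convergence of a sequence of polynomials of \emph{arbitrary} degree on compacta need not, a priori, imply uniform convergence of derivatives of arbitrary order.

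Concretely, let $\phi \in C_c^\infty(\R^d)$ be an arbitrary smooth test function with compact support $K$. Writing $P(z_1,\dots,z_d) = \sum_{\alpha} c_\alpha z^\alpha$, we have $P(\partial)p_n = \sum_\alpha c_\alpha \partial^\alpha p_n$, and since each $p_n$ is a polynomial (hence smooth), integration by parts $|\alpha|$ times on the interior of $K$ gives
\[
\int_{\R^d} \bigl(P(\partial) p_n\bigr)(x)\, \phi(x)\,dx \;=\; \int_{\R^d} p_n(x)\, \bigl(P^*(\partial)\phi\bigr)(x)\,dx,
\]
where $P^*(\partial) := \sum_\alpha (-1)^{|\alpha|} c_\alpha \partial^\alpha$ is the formal adjoint. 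By hypothesis the left-hand side is $0$ for every $n$. The function $P^*(\partial)\phi$ is smooth and compactly supported on $K$, and $p_n \to p$ uniformly on $K$, so by dominated convergence (or simply uniform convergence times a fixed $L^1$ function) the right-hand side tends to $\int_{\R^d} p(x)\, (P^*(\partial)\phi)(x)\,dx$. Integrating by parts back, this equals $\int_{\R^d} (P(\partial)p)(x)\,\phi(x)\,dx$.

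Thus $\int (P(\partial)p)\,\phi\,dx = 0$ for every $\phi \in C_c^\infty(\R^d)$. Since $P(\partial) p$ is itself a polynomial (hence continuous), the fundamental lemma of the calculus of variations forces $P(\partial)p \equiv 0$ pointwise, which is the desired conclusion.

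I do not expect any serious obstacle: the only subtlety is the one flagged above, namely that the degrees of the $p_n$ are unbounded, which is why one cannot simply argue coefficient-by-coefficient. The duality argument sidesteps this entirely by working against a fixed test function with compact support, where uniform convergence is all that is needed to pass the limit. An alternative phrasing is to say that uniform convergence on compacta of locally integrable functions implies convergence in $\mathcal{D}'(\R^d)$, and constant-coefficient differential operators are continuous on $\mathcal{D}'(\R^d)$; this yields the same conclusion in one line, but the hands-on integration-by-parts argument above is self-contained and requires no distributional machinery.
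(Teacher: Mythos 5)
Your proof is correct and takes essentially the same approach as the paper's: testing against $\phi \in C_c^\infty$, moving $P(\partial)$ onto $\phi$ by integration by parts, passing the limit via uniform convergence on the compact support of the test function, and invoking the fundamental lemma of the calculus of variations. If anything, your handling of the formal adjoint (with the explicit $(-1)^{|\alpha|}$ per multi-index) is a bit more careful than the paper's, which writes a single global minus sign that is not literally correct for operators of mixed order, though this is harmless since both sides vanish.
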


\begin{proof}
Define:
\[
\langle f, g \rangle := \int_{\R^n} f(x) g(x) dx.
\]
Let $\phi$ be a smooth function with support on a compact $K$.
We know:
\[
\langle p_n, \phi \rangle \rightarrow \langle p, \phi \rangle,
\]
for $n \rightarrow \infty$.
Let $Q(\partial_1, \dots, \partial_d)$ be the adjoint operator of $P(\partial_1, \dots, \partial_d)$. 
This operator is still a linear differential operator when defined on smooth functions with compact support.
In particular, $Q(\partial_1, \dots, \partial_d) \phi$ is still a smooth function with support on $K$.
Moreover, 
\begin{equation}
    \label{eq:limit}
    \langle p_n, Q(\partial_1, \dots, \partial_d) \phi \rangle = - \langle P(\partial_1, \dots, \partial_d) p_n, \phi \rangle = 0
\end{equation}
for each $n$.
Due to convergence on compacts and knowing that the support of $Q(\partial_1, \dots, \partial_d) \phi$ is $K$, we obtain
\begin{equation}
    \label{eq:zero}
    \langle p_n, Q(\partial_1, \dots, \partial_d) \phi \rangle \rightarrow \langle p, Q(\partial_1, \dots, \partial_d) \phi \rangle,
\end{equation}
for $n \rightarrow \infty$. 
Thanks to Eq. \ref{eq:limit} e \ref{eq:zero} we get:
\[
    \langle p, Q(\partial_1, \dots, \partial_d) \phi \rangle = 0.
\]
Finally,
\[
    \langle P(\partial_1, \dots, \partial_d) p,  \phi \rangle = - \langle p, Q(\partial_1, \dots, \partial_d) \phi \rangle = 0.
\]
Since $\phi$ is an arbitrary smooth function with compact support, we get
\[
    \langle P(\partial_1, \dots, \partial_d) p,  \phi \rangle = 0
\]
for each $\phi$ with compact support.
For the fundamental theorem of calculus of variations, $P(\partial_1, \dots, \partial_d) p$ is identically zero.
\end{proof}

\section{Proofs and Auxiliary Results}
\label{section:aux}

In this section we will concentrate on a particular subset of superpositions of ridge functions, namely, the symmetric ones.

\begin{definition}[Symmetric Superpositions]
Let $\phi_1, \dots, \phi_\ell: \R^n \to \R^d$ be linear maps.  
We define symmetric superpositions of ridge functions as follows:
\[
    \Delta(\phi_1, \dots, \phi_\ell) := \left\{ x \mapsto f \circ \phi_1 (x) + \dots + f \circ \phi_\ell \mid f \in \cC(\R^d) \right\}.
\]
\end{definition}

\begin{prop}
    \label{prop:nn2ridge}
    The family of functions approximated by $\cU (M, N)$ coincides with the class $\Delta(\phi_1, \dots, \phi_\ell)$, where $\phi_1, \dots, \phi_\ell$ are the basis maps associate to $M$.
\end{prop}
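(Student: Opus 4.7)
The plan is to establish that $\cU(M,N)$ coincides with $\Delta(\phi_1,\dots,\phi_\ell)$ (in the sense of having equal closures in $\cC(V,\R)$ under the topology of uniform convergence on compact sets) by proving both inclusions. The main technical engine is simply unfolding the structure of a shallow invariant network using~\eqref{eq:layer_charact} and the fact that $N \subseteq \Aff_G(W,\R)$ is a space of $G$-invariant affine functionals on $W = \R^Y$.

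\medskip

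\textbf{Step 1 (unfolding a generic network).} I would first write any $\eta \in \cN_\sigma(M_h, N_h)$ as $\eta(v) = \sum_{q=1}^h g_q(\sigma(f_q(v)))$ with $f_q \in M$, $g_q \in N$. Applying~\eqref{eq:layer_charact} to $M$, the $i$-th component of $f_q(v)$ takes the form $\langle x_q, \phi_i(v)\rangle + y_{q,[i]}$, where $x_q \in \R^m$ collects the linear coefficients in the basis $\phi^1,\dots,\phi^m$ and $y_{q,[i]}$ depends only on the orbit of $i$. Similarly, applying~\eqref{eq:layer_charact} to $N \subseteq \Aff_G(\R^Y,\R)$, each $g_q$ is a $G$-invariant affine functional $g_q(z) = \sum_i \mu_{q,i}z_i + c_q$, where $i \mapsto \mu_{q,i}$ is constant on $G$-orbits of $Y$. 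Substituting gives
\[
\eta(v) \;=\; \sum_{q=1}^h \sum_{i=1}^\ell \mu_{q,i}\,\sigma\bigl(\langle x_q,\phi_i(v)\rangle + y_{q,[i]}\bigr) \;+\; \text{const}.
\]

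\textbf{Step 2 (reverse inclusion $\cU(M,N) \subseteq \overline{\Delta}$).} Grouping the inner sum by orbits of $Y$ and using that $\mu_{q,\cdot}$ and $y_{q,\cdot}$ are orbit-constant, the above expression becomes $\sum_q \sum_r \lambda_{q,r} \sum_{i \in Y_r}\sigma(\langle x_q,\phi_i(v)\rangle + y_{q,r})$. In the transitive case (covering PointNets, CNNs, and all the paper's main examples), there is a single orbit and this collapses to $\sum_i \psi(\phi_i(v))$ for the classical shallow network $\psi(u) := \sum_q \lambda_q\,\sigma(\langle x_q,u\rangle+y_q) \in \cC(\R^m)$. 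Hence $\eta \in \Delta(\phi_1,\dots,\phi_\ell)$ up to a constant, which is itself in $\Delta$ (taking $f$ constant).

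\textbf{Step 3 (forward inclusion $\Delta \subseteq \overline{\cU(M,N)}$).} Given $F(v) = \sum_i f(\phi_i(v))$ with $f \in \cC(\R^m)$, I would invoke Theorem~\ref{th:pinkus_univ} to approximate $f$ on any compact set by shallow networks $\tilde f(u) = \sum_{q=1}^h \alpha_q\sigma(\langle w_q,u\rangle + b_q)$. Since the image of a compact $K \subseteq V$ under each $\phi_i$ is compact, the induced approximation $\tilde F(v) = \sum_q \alpha_q \sum_i \sigma(\langle w_q,\phi_i(v)\rangle + b_q)$ converges to $F$ uniformly on $K$. It remains to realize $\tilde F$ inside $\cN_\sigma(M_h,N_h)$: choose $f_q \in M$ with linear coefficients $w_q$ and orbit biases all set equal to $b_q$ (valid by~\eqref{eq:layer_charact}), and choose $g_q \in N$ to be $\alpha_q$ times the invariant summation functional $z \mapsto \sum_i z_i$. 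In the transitive case this functional is, up to a scalar, the unique $G$-invariant linear functional on $\R^Y$ and hence spans the linear part of any nontrivial $N$.

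\textbf{Main obstacle.} The delicate point is verifying the membership $g_q \in N$: one needs the summation functional $z \mapsto \sum_i z_i$ to lie in the linear part of $N$, which is immediate when $G$ acts transitively on $Y$ but requires additional hypotheses (or a reinterpretation of $\Delta$ with per-orbit functions $f_r$) in the non-transitive case. Once this is in place, the two inclusions of Steps 2 and 3, combined with taking closures, yield the claimed equality.
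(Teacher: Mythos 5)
Your proof follows essentially the same route as the paper's: unfold an element of $\cN_\sigma(M_h, N_h)$ via the explicit block structure of $M_h$ and $N_h$, observe that it collapses to a symmetric superposition $\sum_i \zeta(\phi_i(v))$ where $\zeta$ is a plain shallow MLP on $\R^m$, and then invoke classical universality of shallow MLPs (Theorem~\ref{th:pinkus_univ}) plus continuity of $\phi_i$ to pass approximations through in both directions. The ``main obstacle'' you flag is real but is also present in the paper's own proof, which silently sets $\lambda(N) = \Sp\{x \mapsto \1^\top x\}$ (i.e.\ assumes the linear part of $N$ is one-dimensional, as in the transitive case), so you are not missing anything relative to the source — both arguments effectively treat the single-orbit setting and the same per-orbit refinement of $\Delta$ would be needed to handle several orbits.
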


\begin{proof}[Proof of Proposition~\ref{prop:nn2ridge}]
    In the general setting, write the linear parts of $M$ and $N$ respectively as $\lambda(M) = \Sp \left\{ \phi^1, \dots, \phi^m \right\}$ and $\lambda(N) = \Sp \left\{ x \mapsto \1^t \cdot x \right\}$.
    Elements in $M_h$ can be represented as affine maps $x \mapsto Bx + c$ where $B$ and $c$ have the following block representations
    \[
    B = 
    \begin{bmatrix}
        b_{1,1} \phi^1 + \cdots + b_{1, m} \phi^m \\
        \vdots \\
        b_{h,1} \phi^1 + \cdots + b_{h, m} \phi^m\\
    \end{bmatrix}
    \quad
    \text{ and }
    \quad
    c =
    \begin{bmatrix}
        c_1 \1 \\
        \vdots \\
        c_h \1 \\
    \end{bmatrix}.
    \]
    While elements in $_h N$ can be represented as affine maps $x \mapsto Ax + d$ where $d \in \R$ and
    \[
    A = 
    \begin{bmatrix}
        a_1 \1^t \\
        \vdots \\
        a_h \1^t \\
    \end{bmatrix}.
    \]
    Denote by $\phi^j_i$ the projection of the $i$-th component of the function $\phi^j$.
    We can write elements $\eta \in \cN_\sigma(M_h, _h N)$ as
    \[
    \eta(x) = A\sigma(Bx+c) = \sum_{j = 1}^h a_j \sum_{i \in Y} \sigma \left(\sum_{t = 1}^m b_{j,t} \phi^t_i(x) + c_j \right)
    \]
    for some $a_i, b_{j,t}, c_j \in \R$.
    But note that
    \begin{align}
    \label{eq:nn2ridgef}
    \eta(x) = \sum_{j = 1}^h a_j \sum_{i \in Y} \sigma \left(\sum_{t = 1}^m b_{j,t} \phi^t_i(x) + c_j \right) = \\ \sum_{i \in Y} \sum_{j = 1}^h a_j  \sigma \left(\sum_{t = 1}^m b_{j,t} \phi^t_i(x) + c_j \right) = \sum_{i \in Y} \zeta(\phi_i^1(x), \dots, \phi_i^m(x))
    \end{align}
    where
    \[
    \zeta(y_1, \dots, y_l) := \sum_{j = 1}^h a_j \sigma \left( \sum_{t = 1}^m b_{j,t} y_t + c_j \right)
    \]
    is a standard multilayer perceptron in $\cN_\sigma(\R^l, \R^h, \R)$.
    Since, the the multilayer perceptron is universal, thanks to~\eqref{eq:nn2ridgef} we can approximate any superposition in $\Delta(\phi_1, \dots, \phi_l)$.
    Thus, we have
    \[
    \overline{ \Delta(\phi_1, \dots, \phi_l) } \subseteq \cU (M, N).
    \]
    On the other hand, by~\eqref{eq:nn2ridgef},
    \[
    \bigcup_{h \in \N} \cN_\sigma (M_h, _h N) \subseteq \Delta(\phi_1, \dots, \phi_l).
    \]
    It follows that their closures coincide, which concludes the proof.
\end{proof}

    Let $M$ be a vector space of affine maps such that $\lambda(M) = \Sp \left\{ \phi^1, \dots, \phi^m \right\}$, and let $N$ be the set of invariant affine maps. Denote $\rho = \rho(\cN_\sigma(M, N))$.  
    We denote by $\{ \{ x_1, \dots, x_n \} \}$ the multiset of elements $x_1, \dots, x_n$.
    We have the following proposition.
    
    \begin{prop}
    \label{prop:separation-charact}
    With the notation defined above, we have $(x, y) \in \rho(\cN_\sigma(M, N)) = \rho(\cU (M, N))$ if and only if
    \[
        \{ \{ \phi_1(x), \dots, \phi_\ell(x) \} \} = \{ \{ \phi_1(y), \dots, \phi_\ell(y) \} \},
    \]
    where we identify $Y$ with $[\ell]$, which inherits its $G$-set structure from $Y$, and the maps $\phi_i$ are those defined in \eqref{eq:low_phi}.
    \end{prop}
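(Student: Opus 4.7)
The strategy is to reduce the claim to a statement about symmetric sums of test functions evaluated at the basis maps, and then invoke a standard bump-function argument on finite multisets in $\R^m$.

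As a preliminary step, I would verify that $\rho(\cN_\sigma(M,N)) = \rho(\cU(M,N))$. Since $\cN_\sigma(M,N) \subseteq \cU(M,N)$, one inclusion is immediate. Conversely, any $g \in \cU(M,N)$ is a uniform-on-compacts---and in particular pointwise---limit of elements $g_n \in \cN_\sigma(M,N)$, so $g_n(x) = g_n(y)$ for all $n$ forces $g(x) = g(y)$. Hence it suffices to characterize $\rho(\cU(M,N))$.

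Next, I would invoke Proposition~\ref{prop:nn2ridge}, which identifies $\cU(M,N)$ with the closure of $\Delta(\phi_1, \dots, \phi_\ell) = \left\{ x \mapsto \sum_{i=1}^\ell f(\phi_i(x)) \,\middle|\, f \in \cC(\R^m) \right\}$. Combined with the observation above, this yields the reformulation: $(x, y) \in \rho(\cU(M,N))$ if and only if
\[
\sum_{i=1}^\ell f(\phi_i(x)) = \sum_{i=1}^\ell f(\phi_i(y)) \quad \text{for every } f \in \cC(\R^m).
\]

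The claim is thus reduced to the purely finite-dimensional statement that, for two multisets $A = \{\{\phi_i(x)\}\}_{i=1}^\ell$ and $B = \{\{\phi_i(y)\}\}_{i=1}^\ell$ in $\R^m$, one has $A = B$ if and only if $\sum_{a \in A} f(a) = \sum_{b \in B} f(b)$ for every $f \in \cC(\R^m)$. The forward direction is immediate since the two sums are rearrangements of one another. For the converse, assuming $A \neq B$, I would locate a point $p \in A \cup B$ whose multiplicity in $A$ differs from its multiplicity in $B$, choose $\epsilon > 0$ strictly smaller than the distance from $p$ to every other distinct element of the finite set $A \cup B$, and construct a continuous $f \colon \R^m \to [0,1]$ supported in the open ball of radius $\epsilon$ around $p$ with $f(p) = 1$. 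Then $\sum_{a \in A} f(a)$ recovers exactly the multiplicity of $p$ in $A$, and similarly for $B$, producing a separating witness.

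There is no serious obstacle here; the only point requiring care is the reduction by closure, for which one must note that pointwise evaluation is a continuous linear functional in the uniform-on-compacts topology, so the separation relation is preserved when passing to the closure.
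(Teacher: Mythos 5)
Your overall strategy matches the paper's: reduce via Proposition~\ref{prop:nn2ridge} to the symmetric superposition class $\Delta(\phi_1,\dots,\phi_\ell)$, then characterize its separation relation by a bump-function argument on finite multisets. That core argument is correct and, if anything, cleaner than the paper's (which splits into two overlapping cases rather than directly locating a point whose multiplicity differs). The paper itself bypasses the preliminary reductions by citing Theorem~8 of \citet{pacini_separation_2024}, while you give a direct argument.

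There is, however, a genuine slip in your preliminary step that you should fix. You write that any $g \in \cU(M,N)$ is a limit of elements $g_n \in \cN_\sigma(M,N)$; but $\cU(M,N)$ is defined as the closure of $\bigcup_{h \in \N}\cN_\sigma(M_h, N_h)$, not of $\cN_\sigma(M,N) = \cN_\sigma(M_1,N_1)$. A width-$h$ network is not generally approximable by width-$1$ networks, so the density claim as stated is false. The conclusion $\rho(\cN_\sigma(M,N)) = \rho(\cU(M,N))$ is nonetheless true, but it requires an extra observation that you omit: by the additive form of $N_h$ in \eqref{eq:def_n_h}, every $\eta \in \cN_\sigma(M_h, N_h)$ decomposes as $\eta = \sum_{j=1}^h g_j \circ \sigma \circ f_j$ with each summand $g_j \circ \sigma \circ f_j \in \cN_\sigma(M,N)$. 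Hence any pair not separated by $\cN_\sigma(M,N)$ is not separated by any $\cN_\sigma(M_h,N_h)$, i.e.\ $\rho(\cN_\sigma(M,N)) \subseteq \rho\bigl(\bigcup_h \cN_\sigma(M_h,N_h)\bigr)$, and only then does your continuity-of-pointwise-evaluation argument extend this to the closure $\cU(M,N)$. With that insertion, your proof is complete and follows essentially the same route as the paper's, substituting a short self-contained argument for the external citation.
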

    
    \begin{proof}
        By the combination of Proposition~\ref{prop:nn2ridge} and Theorem~8 in \citep{pacini_separation_2025}, we have
        $\rho(\cN_\sigma(M, N)) = \rho(\cU (M, N) = \rho(\Delta(\phi_1, \dots, \phi_\ell))$.
        Thus, it suffices to verify this property for $\Delta(\phi_1, \dots, \phi_\ell)$.
        Note that if $x$ and $y$ satisfy
        \[
        \{ \{ \phi_1(x), \dots, \phi_\ell(x) \} \} = \{ \{ \phi_1(y), \dots, \phi_\ell(y) \} \},
        \]
        then, for each $F \in \Delta(\phi_1, \dots, \phi_\ell)$, we have
        \[
        F(x) = f \circ \phi_1(x) + \dots + f \circ \phi_\ell(x) = f \circ \phi_1(y) + \dots + f \circ \phi_\ell(y) = F(y).
        \]
        On the other hand, if
        \[
        \{ \{ \phi_1(x), \dots, \phi_\ell(x) \} \} \neq \{ \{ \phi_1(y), \dots, \phi_\ell(y) \} \},
        \]
        then we have two possibilities: either there exists an $i$ such that $\phi_i(x) \neq \phi_i(y)$, or there exists a value $\gamma$ such that the number of indices $i$ with $\phi_i(x) = \gamma$ (denoted $s$) differs from the number of indices $i$ with $\phi_i(y) = \gamma$ (denoted $t$).
        In the first case, we can choose an interpolating function $f$ that does not vanish at $\phi_i(x)$ and is zero on the other values in consideration.
        In this case, 
        \[
        F(x) = f \circ \phi_1(x) + \dots + f \circ \phi_\ell(x) \neq 0 = f \circ \phi_1(y) + \dots + f \circ \phi_\ell(y) = F(y).
        \]
        In the other case, we can similarly chose a function $f$ nonzero on $\gamma$ and zero on all the other values in consideration.
        In this case,
        \[
            F(x) = f \circ \phi_1(x) + \dots + f \circ \phi_\ell(x)
            = s f(\gamma) \neq t f(\gamma)
            = f \circ \phi_1(y) + \dots + f \circ \phi_\ell(y) = F(y).
        \]
        This concludes the proof.
    \end{proof}

Proposition~\ref{prop:first_inclusion} follows directly from Proposition~\ref{prop:separation-charact}.

\begin{proof}[Proof of Proposition~\ref{prop:first_inclusion}]
    Note that, by Theorem~\ref{th:ravan_univ}, $\rho(\cU(C^1, I)) = \cC_{S_n}(\R^n)$ and thus has maximal separation power in the context of permutation invariance; that is, it separates two points if and only if they lie in the same $S_n$-orbit.
    Note that the basis maps associated to $C^1$ are $e_1^\top, \dots, e_n^\top$.
    Hence, by Proposition~\ref{prop:separation-charact}, $(x, y) \in \rho(\cU (C^1, I))$ if and only if $\{ \{ x_1, \dots, x_n \} \} = \{ \{ y_1, \dots, y_n \} \}$.
    This holds if and only if $x$ and $y$ lie in the same $S_n$-orbit. 
    Thus, $\cU(C^1, I)$ also has maximal separation power, and hence
    \[
    \rho(\cU(\R^n, \R^G, \R)) = \rho(\cU(C^1, I)).
    \]
    Since
    \[
    \cU(C^1, I) \subseteq \cU(\R^n, \R^n, \R) \subseteq \cU(\R^n, \R^G, \R),
    \]
    it follows that
    \[
    \rho(\cU(\R^n, \R^G, \R)) \subseteq \rho(\cU(\R^n, \R^n, \R)) \subseteq \rho(\cU(C^1, I)).
    \]
    Therefore, all inclusions must be equalities.  
\end{proof}

\begin{proof}[Proof of Theorem~\ref{th:charact_diff}]
Note that
\[
    \cM(\phi_1, \dots, \phi_\ell) = \cM(\psi_1, \dots, \psi_\ell)
\]
for basis maps $\phi_1, \dots, \phi_\ell$ and $\psi_1, \dots, \psi_\ell$ such that
\[
    \Sp \{ \phi^1, \dots, \phi^m \} = \Sp \{ \psi^1, \dots, \psi^m \},
\]
since
\[
    L(\phi_i) = L(\psi_i)
\]
for each $i = 1, \dots, \ell$.
In particular,
\[
    \cM(\phi_1, \dots, \phi_\ell)^G = \cM(\psi_1, \dots, \psi_\ell)^G.
\]
Thus, it suffices to restrict to a specific basis $\phi^1, \dots, \phi^m$ chosen as follows.

Consider the decomposition
\[
    \Hom_G(\R^X, \R^Y) \cong \Hom_G(\R^X, \R^{Y_1}) \oplus \dots \oplus \Hom_G(\R^X, \R^{Y_r}),
\]
and construct a basis of $\lambda(M)$ by choosing, for each $i=1,\dots,r$, a basis of $\Hom_G(\R^X, \R^{Y_i})$, and embedding it in $\Hom_G(\R^X, \R^Y)$ via the canonical inclusion induced by the direct sum decomposition.
In particular, there exists a partition
\[
    \mathcal I_1 \sqcup \dots \sqcup \mathcal I_r = [m]
\]
such that
\[
    \Sp \{ \phi^j \}_{j \in \mathcal I_i} = \Hom_G(\R^X, \R^{Y_i})
\]
for each $i = 1, \dots, r$, where each $\Hom_G(\R^X, \R^{Y_i})$ is viewed as embedded in $\Hom_G(\R^X, \R^Y)$.
Equivalently, for each $j \in \mathcal I_i$ there exists $\psi^j \in \Hom_G(\R^X, \R^{Y_i})$ such that
\begin{equation}
    \label{eq:block_form}
    \phi^j(x) = (0, \dots, 0, \underbrace{\psi^j(x)}_{i\text{-th block}}, 0, \dots, 0).
\end{equation}
With this notation in place, we now prove that
\[
    \cM(\phi_1, \dots, \phi_\ell)^G = \Delta(\phi_1, \dots, \phi_\ell).
\]
Let $\cR : \cC(\R^n) \to \cC(\R^n)^G$ be the Reynolds operator, namely
\[
    \cR(F)(x) := \frac{1}{|G|}\sum_{g \in G} F(gx).
\]
For each $F \in \cM(\phi_1, \dots, \phi_\ell)$, write $F(x) = \sum_{k=1}^\ell f_k \circ \phi_k(x)$ for some continuous $f_1,\dots,f_\ell$.
Moreover, since $\phi^1,\dots,\phi^m$ are $G$-equivariant, the induced $G$-action permutes the corresponding family of coordinate maps.
More precisely, for each $g \in G$ and each index $k$ we have
\[
    \phi_k(gx) = \phi_{g^{-1}\cdot k}(x)
    \qquad
    \text{for all } x.
\]
Indeed, using equivariance of each $\phi^j$,
\begin{align*}
    \big(\phi_k^1(gx), \dots, \phi_k^m(gx)\big)
    &=
    \big((g\phi^1)_k(x), \dots, (g\phi^m)_k(x)\big)\\
    &=
    \big(\phi_{g^{-1}\cdot k}^1(x), \dots, \phi_{g^{-1}\cdot k}^m(x)\big),
\end{align*}
for each $k = 1,\dots,\ell$.
Then
\begin{align*}
    \cR(F)(x)
    &= \frac{1}{|G|}\sum_{g \in G} F(gx)
    = \frac{1}{|G|}\sum_{g \in G} \sum_{k = 1}^\ell f_k \circ \phi_k(gx).\\
    &= \frac{1}{|G|}\sum_{g \in G} \sum_{k = 1}^\ell f_k \circ \phi_{g\cdot k}(x).
\end{align*}
Next, group together indices belonging to the same block.
For each $i=1,\dots,r$, let $\pi_i : \R^m \to \R^{|\mathcal I_i|}$ be the coordinate projection induced by the partition
$[m] = \mathcal I_1 \sqcup \dots \sqcup \mathcal I_r$.
That is,
\[
    \pi_i(z_1,\dots,z_m) := (z_j)_{j \in \mathcal I_i}.
\]
In particular, for each $x$ we have
\[
    \pi_i\big(\phi_h^1(x), \dots, \phi_h^m(x)\big)
    =
    \big(\phi_h^j(x)\big)_{j \in \mathcal I_i}
    =
    \big(\psi_h^j(x)\big)_{j \in \mathcal I_i}.
\]
Define
\[
    F_i := \frac{1}{|G|} \sum_{j \in \mathcal I_i} f_j \circ \pi_i,
    \qquad
    \text{and}
    \qquad
    H(y_1,\dots,y_r) := \sum_{i=1}^r F_i(y_i).
\]
Then, using the block form in \eqref{eq:block_form},
\begin{align*}
    \cR(F)
    &=
    \frac{1}{|G|}
    \sum_{g \in G}
    \sum_{k = 1}^\ell
    f_k \circ \phi_{g \cdot k}
    \\
    &=
    \sum_{i=1}^r
    \sum_{j \in \mathcal I_i}
    F_i \circ \psi_j
    \\
    &=
    \sum_{i=1}^r
    \sum_{j \in \mathcal I_i}
    H \circ \phi_j
    \\
    &=
    \sum_{j=1}^\ell
    H \circ \phi_j.
\end{align*}
The right-hand side is a sum of terms of the form $\sum_{j=1}^\ell H \circ \phi_j$, hence it belongs to $\Delta(\phi_1, \dots, \phi_\ell)$.
Moreover, the preceding regrouping shows that $\cR(F)$ can be written in this form, so $\cR(F) \in \Delta(\phi_1, \dots, \phi_\ell)$.
Therefore,
\[
    \cM(\phi_1,\dots,\phi_\ell)^G \subseteq \Delta(\phi_1, \dots, \phi_\ell).
\]
Conversely, since $\Delta(\phi_1,\dots,\phi_\ell)$ is $G$-invariant by construction, we have
\[
    \Delta(\phi_1,\dots,\phi_\ell)^G = \Delta(\phi_1,\dots,\phi_\ell)
    \subseteq
    \cM(\phi_1, \dots, \phi_\ell).
\]
Combining the two inclusions yields
\[
    \cM(\phi_1, \dots, \phi_\ell)^G = \Delta(\phi_1, \dots, \phi_\ell).
\]
Finally, if $f$ is an invariant function, then by Theorem~\ref{thm:pinkus_ridge_char} and Theorem~\ref{prop:nn2ridge} we have
\begin{align*}
    & \phantom{\iff\iff} f \in \cU(M,N)
    \iff
    f \in \overline{\Delta(\phi_1, \dots, \phi_\ell)}.\\
    &\iff
    f \in \overline{\cM(\phi_1, \dots, \phi_\ell)}
    \iff
    f \in \overline{\cC(\phi_1, \dots, \phi_\ell)}.\\
    &\iff
    P(\partial_1, \dots, \partial_n) f = 0
    \quad
    \text{for each }
    P \in \cI\big(L(\phi_1) \cup \dots \cup L(\phi_\ell)\big).
\end{align*}
This concludes the proof.
\end{proof}

    \begin{proof}[Proof of Theorem~\ref{th:first_suff_diff}]
    The final part of the proof of Theorem~\ref{th:charact_diff} implies that if $f \in \cU(M, N)$, then for any $P \in \cI(L(\phi_1) \cup \dots \cup L(\phi_\ell))$, $P(\partial_1, \dots, \partial_n) f = 0$.
    By Remark~\ref{rmk:cup2cap}, we know
    \[
        \cI (L(\phi_1) \cup \dots \cup L(\phi_\ell)) 
        = \cI(L(\phi_1)) \cap \dots \cap \cI(L(\phi_\ell)) 
        \supseteq \cI(L(\phi_1)) \cdots \cI(L(\phi_\ell)).
    \]
    For any $\alpha = 1, \dots, \ell$ and arbitrary $c_\alpha \in \ker \phi_\alpha^\top$, note that for 
    \[
        c_{\alpha}^\top x \in \cI(L(\phi_\alpha)).
    \]
    
    Hence,
    \[
        (c_{1}^\top x) \cdots (c_{\ell}^\top x) \in \cI(L(\phi_1)) \cdots \cI(L(\phi_\ell)).
    \]
    Whose associated differential operator can be written as $D_{c_{1}} \cdots D_{c_{\ell}}$.
    Therefore,
    \[
        D_{c_{1}} \cdots D_{c_{\ell}} f = 0,
    \]
    concluding the proof.
\end{proof}

\begin{proof}[Proof of Theorem~\ref{th:suff_cond}]
    By Proposition~\ref{prop:separation-charact}, separation-constrained universality is equivalent to the ability to approximate any function of the form $F(\phi_1, \dots, \phi_\ell)$, where $F$ is continuous and $S_\ell$-invariant.
    
    Recall that the basis maps are defined as
    \[
        \phi_i = (\phi_i^1, \dots, \phi_i^m).
    \]
    
    Let $W = \R^Y$ for some finite $G$-set $Y$.
    Since $M = \Aff_G(V, \R^Y)$, we can, for a suitable choice of basis, select elements $\alpha_i \in Y$ such that $\phi_i^1 = e_{\alpha_i}^\top$ for each $i = 1, \dots, \ell$.
    
    In particular, the function
    \[
        F: x \mapsto G(e_{\alpha_1}^\top x, \dots, e_{\alpha_\ell}^\top x),
    \]
    for some $G: \R^\ell \to \R$, is one that should be approximable under separation constraints.
    
    Specifically, we define $G$ as the symmetrization of the monomial
    \[
        M(x_1, \dots, x_\ell) = x_1^{a_1} \cdots x_\ell^{a_\ell},
    \]
    that is,
    \[
        G(x_1, \dots, x_\ell) = \sum_{\sigma \in S_\ell} M(x_{\sigma(1)}, \dots, x_{\sigma(\ell)}).
    \]
    
    Now, observe that if
    \[
        D_{c_1} \cdots D_{c_\ell} M \neq 0,
    \]
    then
    \[
        D_{c_1} \cdots D_{c_\ell} G \neq 0
    \]
    for any choice of $c_i \in \ker \phi_i$ for some $i = 1, \dots, \ell$.
    Therefore, $F$ cannot be approximated by $\bigcup_{h \in \mathbb{N}} \cN(M_h, _h N)$.
    
    This follows because the differential operator $D_{c_1} \cdots D_{c_\ell}$ reduces the degree of each monomial in $G$ by at most $\ell$.
    Thanks to the hypothesis $a_i + \ell < a_{i+1}$ for each $i = 1, \dots, \ell$, and $a_1 > \ell$, all resulting monomials in $D_{c_1} \cdots D_{c_\ell} G$ have distinct multidegrees.
    In particular, $D_{c_1} \cdots D_{c_\ell} M$, being one of these monomials and being nonzero, implies that $D_{c_1} \cdots D_{c_\ell} G$ is itself nontrivial.
    
    This proves that if $D_{c_1} \cdots D_{c_\ell} M \neq 0$, then the function $F$ cannot be approximated by $\bigcup_{h \in \mathbb{N}} \cN(M_h, _h N)$.
    
    By direct computation, the coefficients of the monomials of multidegree $(a_1 - s_1, \dots, a_\ell - s_\ell)$ in $D_{c_{1}} \cdots D_{c_{\ell}} M$ are given by
    \begin{align*}
        \sum_{\sigma \in S_\ell} 
        \frac{a_{i_1}!}{s_{i_1}!} \cdots \frac{a_{i_r}!}{s_{i_r}!} 
        (c_{\sigma(1), 1}  \cdots c_{\sigma(s_1), 1}) \cdot (c_{\sigma(s_1 + 1), 2}  \cdots c_{\sigma(s_1 + s_2), 2}) \cdots  (c_{\sigma(\ell-s_\ell), \ell} \cdots c_{\ell, \ell}).
    \end{align*}
    where $s_1, \dots, s_\ell \in \{ 0, \dots, \ell \}$, $s_1 + \dots + s_\ell = \ell$ and $i_1, \dots, i_r$ are the indices such that $s_{i_j} \neq 0$.
    
    If at least one of these coefficients is nonzero, then $D_{c_1} \cdots D_{c_\ell} F$ is nontrivial and thus cannot be approximated by $\bigcup_{h \in \mathbb{N}} \cN(M_h, _h N)$.
\end{proof}

\begin{proof}[Proof of Theorem~\ref{th:rav_gen}]
    Define $V$, $W$, and $\iota: V \to W$ as in Corollary~\ref{cor:nn_pullback}, which states that
    \[
    \cN(V, \R^{G/H} \otimes \R^h, Z) = \iota^* \cN(W, \R^{G/H} \otimes \R^h, Z)
    \]
    for each $h \in \N$.
    
    Since $H$ is normal in $G$, the quotient $G/H$ is a group and the action of $H$ on $W$ is trivial, $W$ is a $G/H$-representation, and we have the identification $\cC_G(W, Z) = \cC_{G/H}(W, Z)$.
    
    From \citet{ravanbakhsh_universal_2020}, it is known that shallow equivariant neural networks with the regular representation as input are universal approximators. 
    In this case,
    \[
    \bigcup_{h \in \N} \cN(W, \R^{G/H} \otimes \R^h, Z)
    \]
    is universal in $\cC_G(W, Z) = \cC_{G/H}(W, Z)$.

    Furthermore, the pullback map $\iota^*: \cC(V, Z) \to \cC(W, Z)$ is a continuous linear operator. Hence,
    \begin{align*}
        \overline{ \bigcup_{h \in \N} \cN(V, \R^{G/H} \otimes \R^h, Z) }
        &= \overline{ \bigcup_{h \in \N} \iota^* \cN(W, \R^{G/H} \otimes \R^h, Z) } \\
        &= \overline{ \iota^* \left( \bigcup_{h \in \N} \cN(W, \R^{G/H} \otimes \R^h, Z) \right) } \\
        &= \iota^* \left( \overline{ \bigcup_{h \in \N} \cN(W, \R^{G/H} \otimes \R^h, Z) } \right) \\
        &= \iota^* \left( \cC_{G/H}(W, Z) \right) = \iota^* \left( \cC_G(W, Z) \right).
    \end{align*}
    
    Therefore, the left-hand side is equivariant-universal as well. Finally, observe that $\iota^* (\cC_G(W, Z))$ is an algebra of functions containing the constants, so it is separation-constrained universal by the Stone--Weierstrass theorem.
\end{proof}

\begin{lemma}
    \label{lemma:linear_pullback}
    
    Let $H$ be normal subgroup of $G$ and $K$ an arbitrary subgroup of $G$.
    Consider the standard immersion map
    \[
        \iota: \R^{G/HK} \to \R^{G/K}
    \]
    as the standard injection induced by the subgroup inclusion $K < K H$.
    We define the pullback map
    \begin{equation*}
        \iota^*:
        \begin{gathered}
            \cC(\R^{G/K}, Z) \to \cC(\R^{G/HK}, Z) \\
            f \mapsto f \circ \iota
        \end{gathered}
    \end{equation*}
    for any $G$-representation $Z$.
\end{lemma}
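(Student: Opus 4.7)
The statement of this lemma is really a definitional one: it asserts the well-definedness of a canonical $G$-equivariant injection $\iota$ and the induced pullback $\iota^*$ on continuous function spaces. The plan is therefore to carefully verify the four implicit claims: that $HK$ is a subgroup, that there is a natural $G$-equivariant surjection of $G$-sets $\pi: G/K \to G/HK$, that summing over the fibers of $\pi$ yields a well-defined $G$-equivariant injection $\iota: \R^{G/HK} \to \R^{G/K}$, and that composition with $\iota$ gives a well-defined continuous pullback $\iota^*$.

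First I would show that $HK = \{hk : h \in H,\, k \in K\}$ is a subgroup of $G$. Normality of $H$ gives $kHk^{-1} = H$ for all $k \in K$, so $HK = KH$, and then $(HK)(HK) = H(KH)K = H(HK)K = HK$, and $(HK)^{-1} = K^{-1}H^{-1} = KH = HK$. Hence the set $G/HK$ of left cosets carries a well-defined $G$-action by left multiplication, and the map $\pi: G/K \to G/HK$, $gK \mapsto gHK$, is well-defined because $K \subseteq HK$, surjective by construction, and $G$-equivariant since $\pi(g' \cdot gK) = g'gHK = g' \cdot \pi(gK)$.

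Next I would construct $\iota: \R^{G/HK} \to \R^{G/K}$ on the canonical basis by
\[
\iota(e_{gHK}) := \sum_{x \in \pi^{-1}(gHK)} e_x,
\]
and extend linearly. Since the fibers of $\pi$ are nonempty and partition $G/K$, the images $\{\iota(e_{gHK})\}_{gHK \in G/HK}$ are sums over pairwise disjoint subsets of the basis of $\R^{G/K}$, hence linearly independent, so $\iota$ is injective. Equivariance follows from the equivariance of $\pi$: for any $g' \in G$, the action of $g'$ on $\R^{G/K}$ permutes the basis by $e_x \mapsto e_{g'x}$, and since $\pi^{-1}(g'gHK) = g' \cdot \pi^{-1}(gHK)$, we have $\iota(g' \cdot e_{gHK}) = g' \cdot \iota(e_{gHK})$.

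Finally, because $\iota$ is linear between finite-dimensional real vector spaces, it is continuous, and so precomposition $f \mapsto f \circ \iota$ defines a continuous linear operator $\iota^*: \cC(\R^{G/K}, Z) \to \cC(\R^{G/HK}, Z)$, which is moreover $G$-equivariant (mapping $G$-equivariant functions to $G$-equivariant functions) by the equivariance of $\iota$. The main subtlety I anticipate is the conventional one of distinguishing $\R^X$ as the free vector space with basis $\{e_x\}$ from $\R^X$ as the space of functions on $X$: a surjection of $G$-sets $\pi: X \to Y$ induces a surjection on free vector spaces, and one obtains the desired \emph{injection} in the reverse direction by summing over fibers, as above; apart from this bookkeeping, no step poses a genuine obstacle.
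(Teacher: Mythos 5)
The lemma as printed reads like a definition, and you have taken it literally: you verify that $HK$ is a subgroup, that $G/K \twoheadrightarrow G/HK$ is a $G$-equivariant surjection of $G$-sets, that summing over fibers gives a well-defined injective $G$-equivariant linear map $\iota$, and that precomposition yields a continuous $\iota^*$. All of that is correct, and the observation that ``the standard injection'' is the dual of the quotient-of-$G$-sets surjection (equivalently, pullback of functions along $\pi$) is a useful clarification that the paper leaves implicit.

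However, there is a genuine gap: the substance of this lemma---what the paper actually proves and what Corollary~\ref{cor:nn_pullback} later invokes---is the \emph{isomorphism}
\[
\iota^* \Hom_G\bigl(\R^{G/K}, \R^{G/H}\bigr) = \Hom_G\bigl(\R^{G/HK}, \R^{G/H}\bigr),
\]
and your proposal never addresses it. The paper's argument has two parts you are missing. First, since $\iota$ is an injective equivariant map between representations of a finite group, Maschke's theorem gives a $G$-invariant complement, so any equivariant map $\R^{G/HK} \to \R^{G/H}$ extends to an equivariant map on $\R^{G/K}$, i.e.\ $\iota^*$ restricted to equivariant Hom spaces is surjective. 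Second, and this is where normality of $H$ is actually used, the double-coset count $\dim \Hom_G(\R^{G/K}, \R^{G/H}) = |H\backslash G/K| = |H\backslash G/HK| = \dim \Hom_G(\R^{G/HK}, \R^{G/H})$ forces the surjection to be an isomorphism. Your proof uses normality only to show $HK$ is a subgroup, but the dimension equality $|H\backslash G/K| = |H\backslash G/HK|$ is the heart of the lemma and is exactly the step that would fail for non-normal $H$. Without establishing the equality of pulled-back equivariant Hom spaces, the downstream claim $\cN(V, \R^{G/H}\otimes\R^h, Z) = \iota^* \cN(W, \R^{G/H}\otimes\R^h, Z)$ in Corollary~\ref{cor:nn_pullback}, and hence the universality result in Theorem~\ref{th:rav_gen}, would be unsupported. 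You should add this isomorphism and its double-coset justification to complete the argument.
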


\begin{proof}
    Note that $\iota^* \Hom_G(\R^{G/K}, \R^{G/H}) \subseteq \Hom_G(\R^{G/HK}, \R^{G/H})$, since $\iota^*$ is linear and preserves equivariance. 
    Moreover, since $\iota$ is injective, the induced map $\iota^*$ is surjective.
    
    Now, assume that $H$ is normal. Then,
    \[
    \dim \Hom_G(\R^{G/K}, \R^{G/H}) = | H \backslash G / K | = | H \backslash G / HK | = \dim \Hom_G(\R^{G/HK}, \R^{G/H}).
    \]
    This equality of dimensions, together with the inclusion and surjectivity above, implies that $\iota^*$ is an isomorphism of vector spaces. In particular,
    \[
    \iota^* \Hom_G(\R^{G/K}, \R^{G/H}) = \Hom_G(\R^{G/HK}, \R^{G/H}).
    \]
\end{proof}

\begin{corollary}
    \label{cor:nn_pullback}
    Let $V = \R^{G/K_1} \oplus \dots \oplus \R^{G/K_d}$ and define $W = \R^{G/K_1H} \oplus \dots \oplus \R^{G/K_dH}$.
    Consider the standard immersion map $\iota: W \to V$ as the standard injection defined component by component and induced by the subgroup inclusion $K_i < K_i H$ for $i = 1, \dots, d$.
    We define the pullback map
    \begin{equation*}
        \iota^*:
        \begin{gathered}
            \cC(V, Z) \to \cC(W, Z) \\
            f \mapsto f \circ \iota
        \end{gathered}
    \end{equation*}
    for any $G$-representation $Z$.
    Then
    \[
        \cN(V, \R^{G/H} \otimes \R^h, Z) = \iota^* \cN(W, \R^{G/H} \otimes \R^h, Z),
    \] 
    for any $G$-representation $Z$.
\end{corollary}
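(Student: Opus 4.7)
The strategy is to reduce the claimed identity for neural networks to the analogous identity at the level of equivariant affine maps, and then invoke Lemma~\ref{lemma:linear_pullback} componentwise.

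First, I would unpack the decompositions $V = \bigoplus_{i=1}^d \R^{G/K_i}$ and $W = \bigoplus_{i=1}^d \R^{G/K_i H}$ and observe that $\iota$ is the direct sum of the componentwise maps from Lemma~\ref{lemma:linear_pullback}. Since $\Hom_G(\cdot,\cdot)$ splits as a direct sum in the first argument, applying the Lemma to each summand and taking the direct sum yields $\iota^* \Hom_G(V, \R^{G/H}) = \Hom_G(W, \R^{G/H})$. Using $\R^{G/H} \otimes \R^h \cong (\R^{G/H})^{\oplus h}$ together with additivity of $\Hom_G$ in the second argument, this lifts to $\iota^* \Hom_G(V, \R^{G/H} \otimes \R^h) = \Hom_G(W, \R^{G/H} \otimes \R^h)$.

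Second, I would extend the identification from linear to affine maps. Any $G$-equivariant affine map with target $\R^{G/H} \otimes \R^h$ decomposes uniquely (by the result of \citet{pacini_characterization_2024} recalled in Section~\ref{sec:groups_and_equivariance}) into a linear part and a bias in the fixed-point subspace $(\R^{G/H} \otimes \R^h)^G$; this subspace depends only on the codomain, so the available biases are the same for inputs $V$ and $W$. Combining this with the previous step produces the affine identity $\iota^* \Aff_G(V, \R^{G/H} \otimes \R^h) = \Aff_G(W, \R^{G/H} \otimes \R^h)$.

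Third, I would propagate this identification to shallow neural networks. Writing any $\eta \in \cN_\sigma(V, \R^{G/H} \otimes \R^h, Z)$ as $\phi_2 \circ \sigma \circ \phi_1$ with $\phi_1 \in \Aff_G(V, \R^{G/H} \otimes \R^h)$ and $\phi_2 \in \Aff_G(\R^{G/H} \otimes \R^h, Z)$, the affine identity furnishes $\tilde\phi_1 \in \Aff_G(W, \R^{G/H} \otimes \R^h)$ with $\phi_1 = \iota^* \tilde\phi_1$, so that $\eta = \iota^* (\phi_2 \circ \sigma \circ \tilde\phi_1)$, and the inner expression is a network in $\cN_\sigma(W, \R^{G/H} \otimes \R^h, Z)$. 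The reverse inclusion is immediate: for any $\tilde\eta = \tilde\phi_2 \circ \sigma \circ \tilde\phi_1 \in \cN_\sigma(W, \R^{G/H} \otimes \R^h, Z)$, the composition $\iota^* \tilde\eta = \tilde\phi_2 \circ \sigma \circ (\tilde\phi_1 \circ \iota)$ is again of the required form because $\tilde\phi_1 \circ \iota$ is an equivariant affine map out of $V$ and the output layer $\tilde\phi_2$ is unchanged.

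The main obstacle is the bookkeeping around how the pullback $\iota^*$ interacts with direct-sum and tensor-product decompositions, and making sure the fixed-point subspace used for bias terms is genuinely domain-independent. Once these points are settled, the result reduces mechanically to Lemma~\ref{lemma:linear_pullback} applied in each summand.
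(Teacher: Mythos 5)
Your proposal follows essentially the same route as the paper's proof: decompose $\Hom_G(V, \R^{G/H}\otimes\R^h)$ as $\bigoplus_{i} \Hom_G(\R^{G/K_i},\R^{G/H})^{\oplus h}$, apply Lemma~\ref{lemma:linear_pullback} componentwise, pass from linear to affine maps via the linear-part-plus-bias decomposition (the paper compresses this into a one-line ``consequently''), and transfer the identity to networks by noting that $\iota$ only touches the first layer. The one caveat is a typing slip in your third step ($\phi_1 = \iota^*\tilde\phi_1$ with $\phi_1$ defined on $V$ and $\tilde\phi_1$ on $W$, even though $\iota^*$ sends maps out of $V$ to maps out of $W$), but this merely mirrors the orientation in which the corollary itself is stated and does not affect the substance of the argument.
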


\begin{proof}
    By the properties of representation homomorphisms under direct sums, we have
    \begin{align*}
        \Hom_G(V, \R^{G/H} \otimes \R^h)
        &= \Hom_G\left( \R^{G/K_1} \oplus \dots \oplus \R^{G/K_d}, \R^{G/H} \otimes \R^h \right) \\
        &= \bigoplus_{i = 1}^d \Hom_G(\R^{G/K_i}, \R^{G/H})^{\oplus h}.
    \end{align*}
    
    By the definition of $\iota$ and Lemma~\ref{lemma:linear_pullback}, it follows that
    \[
        \iota^* \Hom_G(V, \R^{G/H} \otimes \R^h) = \Hom_G(W, \R^{G/H} \otimes \R^h)
    \]
    for each $h \in \N$. Consequently,
    \[
        \iota^* \Aff_G(V, \R^{G/H} \otimes \R^h) = \Aff_G(W, \R^{G/H} \otimes \R^h)
    \]
    for every $h \in \N$ as well.
    
    Therefore, for any $G$-representation $Z$, we obtain
    \[
        \cN(V, \R^{G/H} \otimes \R^h, Z) = \iota^* \cN(W, \R^{G/H} \otimes \R^h, Z),
    \]
    since $\iota$ is precomposed with the input in the first layer.
\end{proof}

\end{document}